\documentclass{article}

\usepackage{microtype}
\usepackage{graphicx}
\usepackage{subfigure}
\usepackage{booktabs} 
\usepackage[most]{tcolorbox}
\usepackage{xcolor}
\usepackage{makecell}

\definecolor{takeawaygreen}{HTML}{424242}
\usepackage{hyperref}

\usepackage[accepted]{icml2026}

\usepackage{amsmath}
\usepackage{amssymb}
\usepackage{mathtools}
\usepackage{amsthm}
\usepackage{booktabs}
\usepackage{multirow}
\usepackage{tabularx}
\usepackage{threeparttable}

\usepackage[capitalize,noabbrev]{cleveref}

\theoremstyle{plain}
\newtheorem{theorem}{Theorem}[section]
\newtheorem{proposition}[theorem]{Proposition}

\theoremstyle{definition}

\theoremstyle{remark}

\icmltitlerunning{RN-D: Discretized Categorical Actors for on-policy RL}

\begin{document}

\twocolumn[
\icmltitle{RN-D: Discretized Categorical Actors for On-Policy Reinforcement Learning}



\icmlsetsymbol{equal}{*}

\begin{icmlauthorlist}
\icmlauthor{Yuexin Bian$^\ast$}{ucsd}
\icmlauthor{Jie Feng$^\ast$}{ucsd}
\icmlauthor{Tao Wang}{ucsd}
\icmlauthor{Yijiang Li}{ucsd}
\icmlauthor{Sicun Gao}{ucsd}
\icmlauthor{Yuanyuan Shi}{ucsd}
\end{icmlauthorlist}

\icmlaffiliation{ucsd}{University of California San Diego, La Jolla, USA}

\icmlcorrespondingauthor{Yuexin Bian}{yubian@ucsd.edu}

\icmlkeywords{Machine Learning, ICML}

\vskip 0.3in
]



\printAffiliationsAndNotice{\icmlEqualContribution} 

\begin{abstract}
On-policy Reinforcement Learning (RL) remains a dominant paradigm for continuous control, yet standard implementations rely on Gaussian actors and relatively shallow MLP policies, often leading to brittle optimization when gradients are noisy, and policy updates must be conservative. In this paper, we revisit actor policy representation as a first-class design choice for on-policy RL. We study discretized categorical actors, which represent each action dimension as a distribution over discrete bins and induce a policy objective analogous to classification cross-entropy loss. Building on architectural advances from supervised learning, we further pair discretized categorical actors with regularized networks, yielding RN-D. Across diverse continuous-control benchmarks, we show that simply replacing the standard Gaussian actor with our proposed actor substantially improves performance, achieving state-of-the-art results within on-policy RL. We release our code at \url{https://github.com/alwaysbyx/RND-RL}.
\end{abstract}

\section{Introduction}
On-policy reinforcement learning (RL) is a central paradigm for sequential decision-making, underpinning widely used algorithms such as policy gradient methods and their modern variants \cite{sutton1998reinforcement,pmlr-v37-schulman15,pmlr-v48-mniha16,schulman2017proximal}. Despite their favorable theoretical properties and widespread adoption, these methods remain notoriously difficult to optimize in practice. For example, Proximal Policy Optimization (PPO) is often reported to exhibit unstable training dynamics, pronounced sensitivity to hyperparameters, and strong dependence on policy parameterization choices \cite{andrychowicz2020mattersonpolicyreinforcementlearning}. Even on standard continuous-control benchmarks, achieving reliable performance typically requires careful ``code-level'' optimization \cite{huang202237}. These observations suggest that the primary challenges of on-policy RL are not solely due to insufficient data or model capacity, but are closely tied to the optimization behavior induced by architectural and parameterization choices.

Tremendous efforts have been devoted to understanding and improving the training performance of on-policy reinforcement learning algorithms, including the development of improved advantage estimators \cite{schulman2015high}, analyses highlighting the role of value estimation \cite{wang2025improving}, and the design of constrained policy update mechanisms \cite{pmlr-v37-schulman15,schulman2017proximal,xie2024simple}. In contrast, the parameterization of the policy itself is often treated as a secondary design choice, with continuous control tasks typically adopting Gaussian policies with diagonal covariance as a default since \cite{williams1992simple}, largely due to its analytical and computational convenience. 

\begin{figure*}[t]
    \centering
    \includegraphics[width=\linewidth]{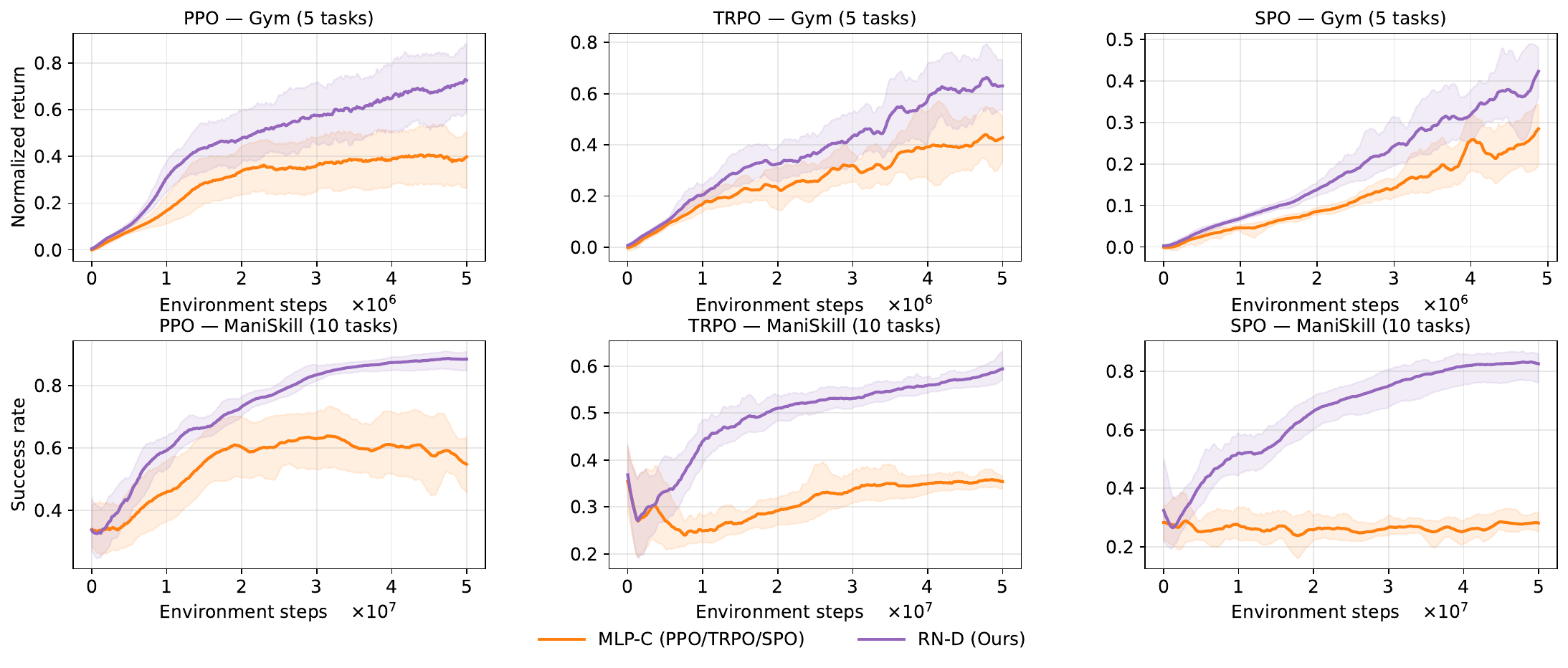}
    \caption{
    \textbf{RN-D consistently improve on-policy RL across algorithms.}
We compare the dominant standard implementation, a continuous MLP actor (MLP-C), with our regularized discretized actor (RN-D) under PPO, TRPO, and SPO on Gym locomotion and ManiSkill manipulation benchmarks.
RN-D achieves faster convergence and higher final performance across both algorithms and benchmarks.
    }
    \label{fig:intro_main_results}
\end{figure*}
Recent work has begun to challenge the conventional from two complementary directions. 
First, earlier on-policy studies show that discretizing continuous actions and optimizing categorical policies can yield substantial gains over Gaussian actors~\cite{tang2020discretizing,zhu2024discretizing}. 
Second, value-function learning has increasingly embraced classification-style objectives: distributional RL trains categorical return distributions with a cross-entropy loss~\cite{pmlr-v70-bellemare17a}, and recent work shows that replacing MSE with cross-entropy can improve robustness to noisy bootstrapped targets and support scaling to higher-capacity networks~\cite{farebrother2024stop,hafner2025mastering}.
These advances are particularly pronounced in off-policy RL, where the value function is the primary object being optimized and directly shapes the policy update.
A symmetric perspective applies to on-policy RL: the actor is the object being optimized, and discretized categorical actors turn the policy update into a cross-entropy-like objective over action bins, making architectural and optimization choices that benefit classification especially relevant. 
A closely related phenomenon appears in on-policy RL for language models, where the policy is categorical over tokens and PPO-style optimization is widely used for RLHF~\cite{yue2025does,guo2025deepseekr1,yu2025dapo,verl2025}. Moreover, on the theoretical side, \cite{agarwal2020theory} shows that policy gradient methods admit global convergence guarantees with the softmax policy parameterization under the condition of exact gradients and adequate exploration.

\emph{Motivated by this connection, we revisit actor design in on-policy RL for continuous control tasks through the lens of classification-style objectives.} We make a simple drop-in change to the policy parameterization:
Instead of a Gaussian policy with continuous action outputs, we use a factorized categorical policy, which turns the on-policy policy loss into a cross-entropy-like objective over selected bins.
This formulation enables the actor update to inherit optimization properties commonly associated with classification, while representing continuous control through discretized action dimensions.
We then pair this parameterization with regularized actor networks, which promote more stable training.

Empirically, this simple change leads to substantial and consistent improvements over the dominant standard implementation, a continuous MLP Gaussian actor. As shown in Figure~\ref{fig:intro_main_results}, our regularized discretized actor achieves faster convergence and higher final performance on both Gym locomotion and ManiSkill manipulation benchmarks. The same trend holds across PPO~\cite{schulman2017proximal}, TRPO~\cite{schulman2015trust}, and SPO~\cite{xie2024simple}, suggesting that the benefit is not tied to a particular on-policy surrogate objective, but instead reflects a more general advantage of combining classification-style actor parameterization with regularized network design.

The contributions of this paper can be summarized as: 
\begin{itemize}
  \item We revisit actor parameterization and architecture as underexplored levers for improving on-policy continuous control, and study discretized categorical policies whose optimization resembles cross-entropy loss. 
  \item We propose a regularized actor network design for discretized on-policy learning that improves optimization stability and reduces gradient variance during training.
  \item We demonstrate that our approach significantly improves RL performance and accelerates convergence across standard locomotion and ManiSkill benchmarks, covering both state-based and vision-based tasks. Moreover, the method is algorithm-agnostic and consistently benefits multiple on-policy algorithms.
\end{itemize}

\section{Related Works}

\paragraph{On-policy reinforcement learning.}
On-policy policy gradient methods optimize the expected return by differentiating through a
stochastic policy, dating back to REINFORCE~\cite{williams1992simple} and the policy gradient
theorem with function approximation~\cite{sutton1999policy}. Modern deep actor--critic
systems (e.g., A3C) improved stability and throughput via parallel data collection and actor--critic
updates~\cite{mnih2016asynchronous}. A central challenge in on-policy learning is balancing stability and
sample reuse. Natural-gradient and trust-region perspectives~\cite{cobbe2021phasic,wu2017scalable}
motivated algorithms that constrain policy updates, most notably TRPO~\cite{schulman2015trust}.
PPO~\cite{schulman2017proximal} popularized a simpler clipped surrogate objective that supports multiple
epochs of minibatch updates, and generalized advantage estimation (GAE) further improved
variance--bias tradeoffs in advantage computation~\cite{schulman2015high}. 

In parallel, several large-scale empirical studies have shown that the performance of on-policy methods depends strongly on ``implementation ingredients'' beyond the high-level objective, including
normalization, clipping, network design, and optimization details~\cite{andrychowicz2020mattersonpolicyreinforcementlearning,gronauer2021successful}.
Motivated by these findings, our work targets a complementary axis: we study how the \emph{policy
representation itself}, in particular categorical parameterizations with neural network architecture design, affects optimization behavior under standard on-policy training.

\paragraph{Actor policy parameterization for continuous control.}
Most continuous-control actor--critic implementations parameterize the policy with a diagonal
Gaussian distribution~\cite{andrychowicz2020mattersonpolicyreinforcementlearning} due to its simplicity and reparameterization-friendly sampling; however, this
choice can interact with exploration and gradient variance. 
To address the limitations of Gaussian policies with bounded action spaces, prior work replaces them with the finite-support Beta distribution~\cite{chou2017improving}, yielding a bias-free, lower-variance policy that performs well empirically. Further, considering the extremes along each action dimension, a Bernoulli distribution is applied to replace the Gaussian parametrization of continuous control methods and show improved performance on several continuous control benchmarks~\cite{seyde2021bang}. 
Within PPO-style training, PPO-CMA adapts the exploration covariance inspired by CMA-ES to mitigate
premature variance collapse in Gaussian policies~\cite{hamalainen2020ppo}.

An orthogonal line of work replaces continuous distributions with \emph{discretized} categorical
policies over per-dimension action bins. Tang and Agrawal~\cite{tang2020discretizing} showed that factorized
categorical policies can scale to high-dimensional control and often improve on-policy optimization, and further proposed ordinal parameterizations to explicitly encode the natural ordering of action bins. More recently, unimodal discrete
parameterizations (e.g., Poisson-based) were introduced to enforce ordering structure and to reduce undesirable multimodality
and gradient variance in discretized actors~\cite{zhu2024discretizing}.


\paragraph{Network architectures. }
Recent advances in computer vision and NLP have been largely driven by scaling model capacity~\cite{lee2024simba}. 
In supervised learning, architectural choices, including skip connections, normalization, and depth/width scaling, are known to strongly influence optimization. Residual networks~\cite{he2016deep} improve the trainability of deep models via identity pathways, while batch normalization~\cite{ioffe2015batch} and layer normalization~\cite{ba2016layer} help stabilize activations and gradients. In contrast, network design and systematic scaling have been comparatively less explored in deep reinforcement learning, where many standard benchmarks and implementations~\cite{huang2022cleanrl,andrychowicz2020mattersonpolicyreinforcementlearning} still rely on relatively shallow MLP backbones (e.g., 2-3 layers) for the actor and critic.

Several recent works revisit careful network designs for deep RL, often highlighting how network capacity can substantially influence performance~\cite{lee2024simba,hansen2024tdmpc2scalablerobustworld,hansen2025learningmassivelymultitaskworld}.
SimBa~\cite{lee2024simba} and its follow-up SimBa-v2~\cite{lee2025hyperspherical} propose scalable MLP backbones for deep RL, showing that residual pathways and improved normalization can unlock consistent performance gains as model capacity increases.
BroNet provides an extensive study of critic scaling and introduces a regularized residual MLP
architecture to unlock performance gains when increasing critic capacity~\cite{nauman2025bigger}. 
While these works primarily improve performance by scaling backbones and strengthening value learning (often in off-policy settings), a systematic understanding of how \emph{actor} network architecture influences \emph{on-policy} optimization remains less mature. We complement this line of research by studying the actor network design principles under on-policy training, and by considering discretized categorical actors whose policy objective resembles a cross-entropy loss over action bins.

\section{Preliminaries}
\label{sec:preliminaries}

We consider standard episodic reinforcement learning in continuous-control Markov decision processes (MDPs) with state space $\mathcal{S}$, action space $\mathcal{A}\subset\mathbb{R}^m$, transition dynamics $P(\cdot\mid s,a)$, reward function $r(s,a)$, and discount factor $\gamma\in(0,1)$. A (stochastic) actor policy $\pi_\theta(a\mid s)$ is parameterized by $\theta$ and induces the discounted return
\begin{equation}
J(\theta) \triangleq \mathbb{E}_{\tau\sim \pi_\theta}\Big[\sum_{t=0}^{\infty}\gamma^t r(s_t,a_t)\Big],
\end{equation}
where $\tau=(s_0,a_0,s_1,a_1,\ldots)$ is a trajectory generated by interacting with the environment under $\pi_\theta$.

\subsection{On-policy actor optimization}
\label{sec:onpolicy}
On-policy methods such as Proximal Policy Optimization (PPO)~\cite{schulman2017proximal} update the actor using data collected from the current policy. Let $\pi_{\theta_{\text{old}}}$ denote the behavior policy used to sample a batch of trajectories. PPO maximizes a clipped surrogate objective that stabilizes policy updates by constraining the change in action probabilities:
\begin{equation}\label{eq:ppo_obj}
\mathcal{L}^{\mathrm{PPO}}(\theta)
= \mathbb{E}\Big[
\min\big(
r_t(\theta)\hat{A}_t,\;
\mathrm{clip}(r_t(\theta),1-\epsilon,1+\epsilon)\hat{A}_t
\big)
\Big].
\end{equation}
where $r_t(\theta)\triangleq \frac{\pi_\theta(a_t\mid s_t)}{\pi_{\theta_{\text{old}}}(a_t\mid s_t)}$ is the importance ratio, $\epsilon>0$ is the clipping threshold, and $\hat{A}_t$ is an advantage estimate computed from a learned value function $V_\phi(s)$ (e.g., using generalized advantage estimation).
Unless stated otherwise, we consider the standard actor--critic setup where $\phi$ is updated by minimizing a squared-error value loss, and the actor is updated by maximizing~\eqref{eq:ppo_obj}. 

\subsection{Policy parameterizations for continuous control}
\label{sec:policy_param}
In this work, we consider two common stochastic actor families for continuous control: a continuous Gaussian policy and a discrete categorical policy constructed via action discretization. Without loss of generality, we assume the action space $\mathcal{A}=[-1,1]^m$.

\paragraph{Continuous Gaussian actor.}
The continuous actor models $\pi_\theta(a\mid s)$ as a multivariate Gaussian distribution
\begin{equation}
\pi^{c}(a\mid s)=\mathcal{N}\!\big(\mu_\theta(s),\,\Sigma_\theta(s)\big),
\end{equation}
where $a\in\mathbb{R}^m$ is an $m$-dimensional continuous action vector. The mean $\mu_\theta(s)\in\mathbb{R}^m$ is produced by a neural network. The covariance is typically chosen to be diagonal,
\begin{equation}
\Sigma_\theta(s)=\mathrm{Diag}\big(\sigma_\theta^2(s)\big)\in\mathbb{R}^{m\times m},
\end{equation}
with $\sigma_\theta(s)\in\mathbb{R}_{+}^m$ parameterized by $\theta$ (either state-dependent or state-independent). 

\paragraph{Discrete categorical actor via uniform discretization.}
In contrast to continuous parameterizations, a discrete actor represents the policy as a categorical distribution over a finite set of discretized actions. Without loss of generality, we assume $\mathcal{A}=[-1,1]^m$. For each action dimension $i\in\{1,\ldots,m\}$, we define a uniform discretization of $[-1,1]$ into $K$ bins:
\begin{equation}
\mathcal{A}_i
= \Bigl\{\,\tfrac{2j}{K-1}-1 \;\Big|\; j=0,1,\ldots,K-1 \Bigr\}.
\end{equation}
The policy over these $K$ discrete choices is parameterized by logits $z_\theta(s)_{i,j}$ and a softmax:
\begin{equation}
\pi^{d}(a^{i}_{j}\mid s)
=\frac{\exp\big(z_\theta(s)_{i,j}\big)}
{\sum_{k=1}^{K}\exp\big(z_\theta(s)_{i,k}\big)},
\quad j=1,\ldots,K.
\end{equation}
We assume conditional independence across action dimensions, yielding a factorized joint policy
\begin{equation}
\pi^{d}(a\mid s)=\prod_{i=1}^{m}\pi^{d}(a^{i}_{j_i}\mid s),
\end{equation}
where $a=(a^{1}_{j_1},\ldots,a^{m}_{j_m})$ corresponds to selecting one bin index $j_i$ per dimension. The resulting action lies in the original range $[-1,1]^m$ by construction.

Note that beyond the plain softmax over per-bin logits, some works~\cite{tang2020discretizing, zhu2024discretizing} explore structured discrete actor parameterizations that explicitly exploit the ordinal structure of discretized action bins to improve optimization stability. In this paper, we instead focus on the simplest baseline: uniform discretization with an independent softmax categorical policy per action dimension, so as to isolate the effect of discretizing the actor without additional architectural constraints.

\paragraph{Training with PPO.}
Both the continuous Gaussian actor and the discretized categorical actor can be trained under the same on-policy optimization framework (e.g., PPO). The only policy-specific ingredient is the log-likelihood $\log \pi_\theta(a_t\mid s_t)$ used to form the importance ratio in the PPO surrogate objective: it is evaluated under a Gaussian density for $\pi^c$ and under a categorical probability mass function for $\pi^d$. All other components of the training pipeline, on-policy rollout collection, advantage estimation, and clipped surrogate maximization can be identical.

\section{Revisiting Discrete Categorical Policies}
\label{sec:revisit_discrete}

This section revisits discretized categorical policies for continuous control from two complementary viewpoints. First, we analyze how the policy-gradient estimator behaves under a factorized categorical policy and how its variance depends on the discretization. Second, we provide an optimization perspective that connects PPO actor updates to standard supervised losses: cross-entropy for categorical actors and squared-error (under Gaussian likelihood) for continuous actors. These perspectives motivate the empirical and algorithmic choices studied in later sections.

\begin{figure}[h]
    \centering
    \includegraphics[width=1.0\linewidth]{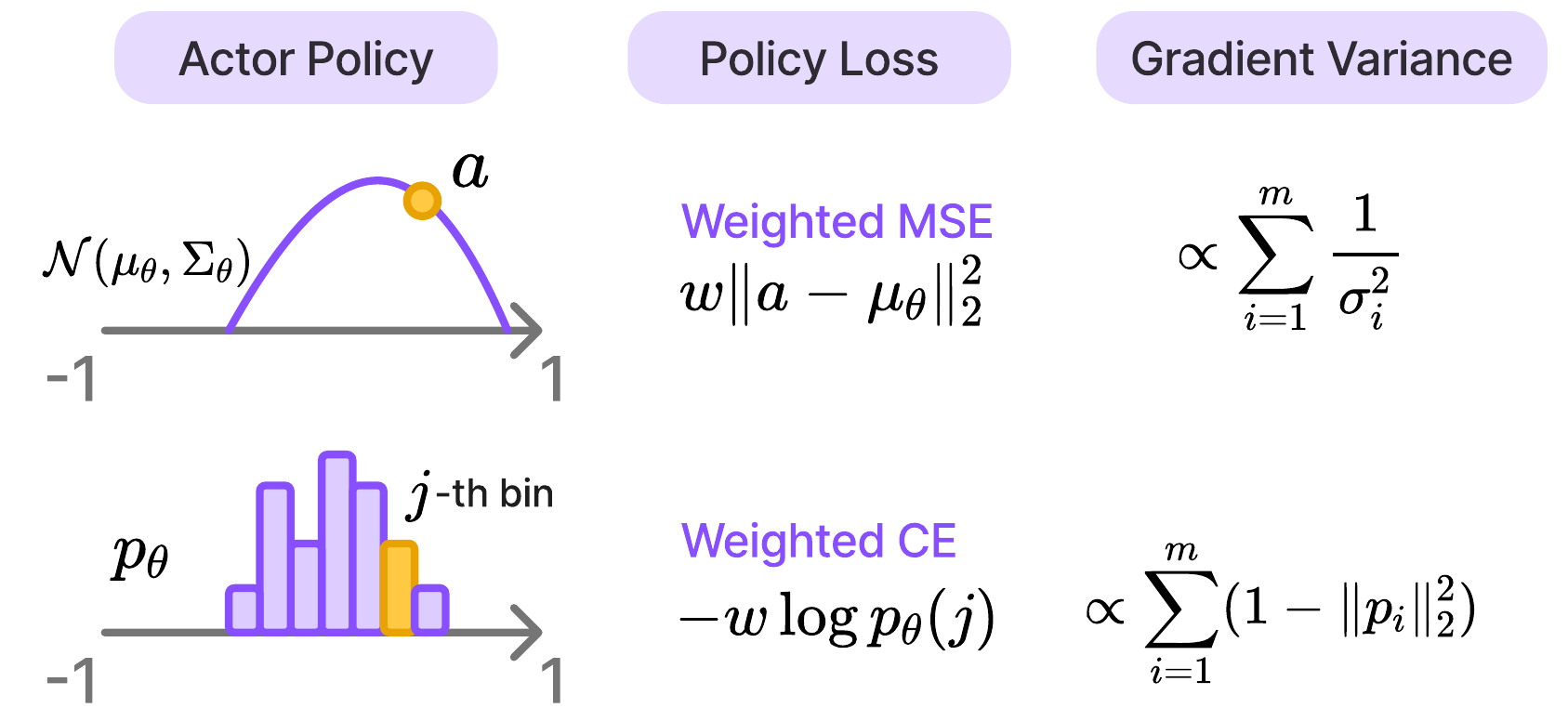}
    \caption{Understanding Continuous Gaussian and Discrete categorical actor policy from two perspectives. }
    \label{fig:revisit_compare}
\end{figure}
\subsection{Policy-gradient estimator and variance}
\label{sec:pg_variance}

\paragraph{On-policy policy gradient.}
To analyze the stochastic gradient structure underlying PPO, we consider the policy-gradient term obtained by differentiating the loss terms in~\eqref{eq:ppo_obj} with respect to $\theta$:
\begin{equation}
g(\theta)
\triangleq
\mathbb{E}_{(s_t,a_t)\sim\pi_{\theta_{\mathrm{old}}}}
\Big[
w_t(\theta)\,\hat{A}_t\,\nabla_\theta \log \pi_\theta(a_t\mid s_t)
\Big],
\label{eq:pg_general}
\end{equation}
where $w_t(\theta)$ denotes the effective PPO importance weight induced by clipping,
\begin{equation}
w_t(\theta)=
\begin{cases}
r_t(\theta), & \text{if unclipped},\\
0, & \text{if clipped},
\end{cases}
\end{equation}
$r_t(\theta)$ is the importance ratio and $\hat{A}_t$ is an advantage estimator.
In practice,~\eqref{eq:pg_general} is estimated from finite on-policy rollouts, so its optimization behavior is strongly influenced by the variance of the random vector
$w_t(\theta)\,\hat{A}_t\,\nabla_\theta \log \pi_\theta(a_t\mid s_t)$.

\paragraph{A vanilla policy-gradient toy setting.}
To isolate the intrinsic stochasticity of the score function, we consider a simplified one-step on-policy setting with a fixed state $s$ and the vanilla REINFORCE estimator (i.e., $w_t(\theta)\equiv 1$). We further assume a constant return $R_t\equiv R$ independent of the sampled action~\cite{zhu2024discretizing}. The stochastic gradient estimator becomes
\begin{equation}
\hat{g}(\theta)\;=\;R\,\nabla_\theta \log \pi_\theta(a\mid s),\quad a\sim \pi_\theta(\cdot\mid s).
\label{eq:reinforce_constR}
\end{equation}
Although the true gradient is zero in this setting (since $R$ does not depend on $a$), the estimator~\eqref{eq:reinforce_constR} generally has nonzero variance; this is precisely the variance that baselines/advantages are designed to reduce in practice.

\begin{proposition}[Gradient variance for Gaussian vs.\ categorical policies]\label{prop:score_variance}
Fix a state $s$ and consider the one-step REINFORCE estimator
$\hat g(\theta)=R\,\nabla_\theta \log \pi_\theta(a\mid s)$ with $a\sim\pi_\theta(\cdot\mid s)$ and constant return $R$.
Then the conditional covariance for the two policy families are: 

[1] \textbf{Gaussian (gradient w.r.t.\ mean).}
Denote $\pi^c(a\mid s)=\mathcal{N}(\mu,\Sigma)$ with diagonal $\Sigma=\mathrm{Diag}(\sigma^2)$ and treat $\mu\in\mathbb{R}^m$ as the parameter (with $\Sigma$ fixed). Then
\begin{equation}
\mathbb{E}\!\big[\|\hat g_\mu\|_2^2\mid s\big]=R^2\,\mathrm{Tr}(\Sigma^{-1})
=\sum_{i=1}^m \frac{R^2}{\sigma_i^2}.
\label{eq:gauss_cov}
\end{equation}

[2] \textbf{Categorical (gradient w.r.t.\ logits).}
Denote the discretized categorical policy factorize across $m$ action dimensions as
$\pi^d(a\mid s)=\prod_{i=1}^m \pi^d(a^i_{j_i}\mid s)$, where
$p_i(s)\in\Delta^{K-1}$ is the per-dimension softmax probability vector over $K$ bins and
$j_i\sim \mathrm{Cat}(p_i(s))$.
Treat the per-dimension logits $z_i(s)\in\mathbb{R}^K$ as parameters and let
$z(s)=[z_1(s);\ldots;z_m(s)]\in\mathbb{R}^{mK}$.
Then
\begin{equation}
\begin{aligned}
  \mathbb{E}\!\left[\|\hat g_z\|_2^2 \mid s\right]
 =
R^2\sum_{i=1}^m \Big(1-\|p_i(s)\|_2^2\Big)  \leq
mR^2\Big(1-\frac{1}{K}\Big)
\end{aligned}
\label{eq:cat_m_dim_second_moment}
\end{equation}
Moreover, the inequality is tight if and only if $p_i(s)$ is uniform over the $K$ bins
for all $i$.
\end{proposition}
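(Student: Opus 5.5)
The plan is to compute the score function explicitly for each family and take its expected squared norm directly; no earlier result is needed beyond the definitions in Section~\ref{sec:policy_param}.

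\emph{Gaussian case.} For $\pi^c(a\mid s)=\mathcal{N}(\mu,\Sigma)$ with $\Sigma$ fixed we have $\log\pi^c(a\mid s)=-\tfrac12(a-\mu)^\top\Sigma^{-1}(a-\mu)+\text{const}$. Hence
\[
\nabla_\mu\log\pi^c=\Sigma^{-1}(a-\mu)
\quad\text{and}\quad
\hat g_\mu=R\,\Sigma^{-1}(a-\mu).
\]
Write $a-\mu=\Sigma^{1/2}\xi$ with $\xi\sim\mathcal{N}(0,I)$. Then $\|\hat g_\mu\|_2^2=R^2\xi^\top\Sigma^{-1}\xi$, and the identity $\mathbb{E}[\xi^\top M\xi]=\mathrm{Tr}(M)$ gives $R^2\,\mathrm{Tr}(\Sigma^{-1})=\sum_i R^2/\sigma_i^2$, since $\Sigma$ is diagonal.

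\emph{Categorical case.} By factorization, $\log\pi^d(a\mid s)=\sum_i\log p_i(s)_{j_i}$. The $i$-th summand depends only on the block $z_i$, so the gradient is block-separable, and the $i$-th block is the softmax score $\nabla_{z_i}\log p_{i,j_i}=e_{j_i}-p_i$. Thus $\|\hat g_z\|^2=R^2\sum_i\|e_{j_i}-p_i\|^2$, and by linearity each term can be handled using only the marginal $j_i\sim\mathrm{Cat}(p_i)$. Expanding,
\[
\mathbb{E}\|e_{j}-p\|^2=\mathbb{E}\big[1-2p_j\big]+\|p\|^2=1-2\|p\|^2+\|p\|^2=1-\|p\|^2,
\]
which proves the equality. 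For the bound, Cauchy--Schwarz (equivalently QM--AM) on the simplex gives $\|p_i\|_2^2\ge (\sum_j p_{i,j})^2/K=1/K$, with equality iff $p_i$ is uniform. Summing over $i$ yields $mR^2(1-1/K)$.

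\emph{Equality condition.} Because each term satisfies $1-\|p_i\|^2\le 1-1/K$ individually, the sum attains the bound iff every term does. This holds iff every $p_i$ is uniform, assuming $R\neq0$; if $R=0$ equality is trivial, which I would note as a caveat. This step is the only place needing care, and beyond that caveat there is no real obstacle. The softmax Jacobian identity $\partial\log p_j/\partial z_k=\delta_{jk}-p_k$ is the one computation to state explicitly.
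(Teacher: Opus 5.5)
Your proposal is correct and takes essentially the same route as the paper: you compute the Gaussian and softmax scores explicitly, expand $\mathbb{E}\|e_{j}-p\|_2^2=1-\|p\|_2^2$, and apply Cauchy--Schwarz on the simplex, and your whitening $a-\mu=\Sigma^{1/2}\xi$ is just a repackaging of the paper's $\mathrm{Tr}(\Sigma^{-2}\Sigma)$ step (the paper also uses the zero-mean score identity to equate these second moments with the trace of the conditional covariance that the statement refers to). Your caveat that the ``if and only if'' equality condition needs $R\neq 0$ is a valid refinement that the paper leaves implicit.
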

\begin{proof}
    The full proof is provided in Appendix~\ref{prop:score_variance_proof}. 
\end{proof}


\begin{tcolorbox}[
  enhanced,
  colback=white,
  frame hidden,
  borderline north={1.2pt}{0pt}{takeawaygreen},
  borderline south={1.2pt}{0pt}{takeawaygreen},
  left=1pt,right=1pt,top=2pt,bottom=2pt,
]
{\color{takeawaygreen}\bfseries Insight:}
For continuous Gaussian actors, gradient variance can grow sharply as the policy standard
deviation decreases, a behavior that commonly occurs in practice as exploration is reduced
over the course of training.
Even at early training stages, where the standard deviation remains relatively large, e.g., $\sigma=1$, the per-dimension gradient-variance gap between Gaussian and categorical policies is lower bounded by $1/K$, and thus the total gap scales linearly with the action dimension $m$.
\end{tcolorbox}

\subsection{Optimization view: cross-entropy vs.\ squared error}
\paragraph{Gaussian actor: (weighted) squared error.}
For a diagonal Gaussian with (state-dependent or state-independent) variance, the negative log-likelihood is
\begin{equation}
\begin{aligned}
-\log \pi^c_\theta(a\mid s)
& =
\frac{1}{2}(a-\mu_\theta(s))^\top \Sigma_\theta(s)^{-1}(a-\mu_\theta(s)) \\
& \;+\;
\frac{1}{2}\log|\Sigma_\theta(s)|
\;+\; \text{constant.}   
\end{aligned}
\label{eq:gauss_nll}
\end{equation}
If $\Sigma_\theta(s)=\sigma^2 I$ is fixed, then minimizing~\eqref{eq:gauss_nll} with respect to $\mu_\theta(s)$ is equivalent to minimizing a mean squared error:
\begin{equation}
-\log \pi^c_\theta(a\mid s)
\;\propto\;
\|a-\mu_\theta(s)\|_2^2.
\end{equation}
In combination with equation~\eqref{eq:pg_general}, the Gaussian actor update admits an interpretation as a weighted mean squared error regression, where the weights are determined by the advantage and the importance ratio.

\paragraph{Categorical actor: (weighted) cross-entropy.}
For a discretized categorical policy, a sampled action corresponds to a bin index $j_i$ per dimension.
The negative log-likelihood (NLL) for each dimension is
\begin{equation}
-\log \pi^d_\theta(a^i\mid s) = -\log p_i(j_i\mid s),
\end{equation}
which is exactly the cross-entropy between a one-hot target $e_{j_i}$ and the predicted distribution $p_i(s)$.
Under equation~\eqref{eq:pg_general}, the objective reduces to a weighted cross-entropy loss, where the weights are given by the advantage-modulated importance ratio (with clipping).
Accordingly, the categorical actor update corresponds to minimizing this weighted cross-entropy over the sampled action bins.

With strong empirical results to support the use of
cross-entropy as a “drop-in” replacement for the mean
squared error (MSE) regression loss in deep RL for value function learning~\cite{farebrother2024stop}, few works address this for the actor network.

\begin{tcolorbox}[
  enhanced,
  colback=white,
  frame hidden,
  borderline north={1.2pt}{0pt}{takeawaygreen},
  borderline south={1.2pt}{0pt}{takeawaygreen},
  left=1pt,right=1pt,top=2pt,bottom=2pt,
]
{\color{takeawaygreen}\bfseries Insight:}
From this perspective, On-policy RL training differs across policy families primarily through the likelihood model that defines $\log \pi_\theta(a\mid s)$: categorical policies induce (weighted) cross-entropy objectives over bins, while Gaussian policies induce (weighted) quadratic objectives over continuous action. 


\end{tcolorbox}

Although prior work provides strong empirical evidence that cross-entropy can replace the mean squared error (MSE) regression loss for value function learning in deep reinforcement learning~\cite{farebrother2024stop,lee2025hyperspherical}, the consequences of analogous loss choices for actor network learning remain largely unexplored. Motivated by viewing the policy objective through this loss-based lens: weighted MSE for Gaussian actors versus weighted cross-entropy for categorical actors, we hypothesize that the \textbf{classification}-style structure of the categorical update can better exploit neural network capacity. This perspective further motivates our use of discrete categorical actors as a scalable design choice for training large RL policies.


\section{Actor Network Architecture}

\begin{figure}[t]
    \centering
    \includegraphics[width=1.\linewidth]{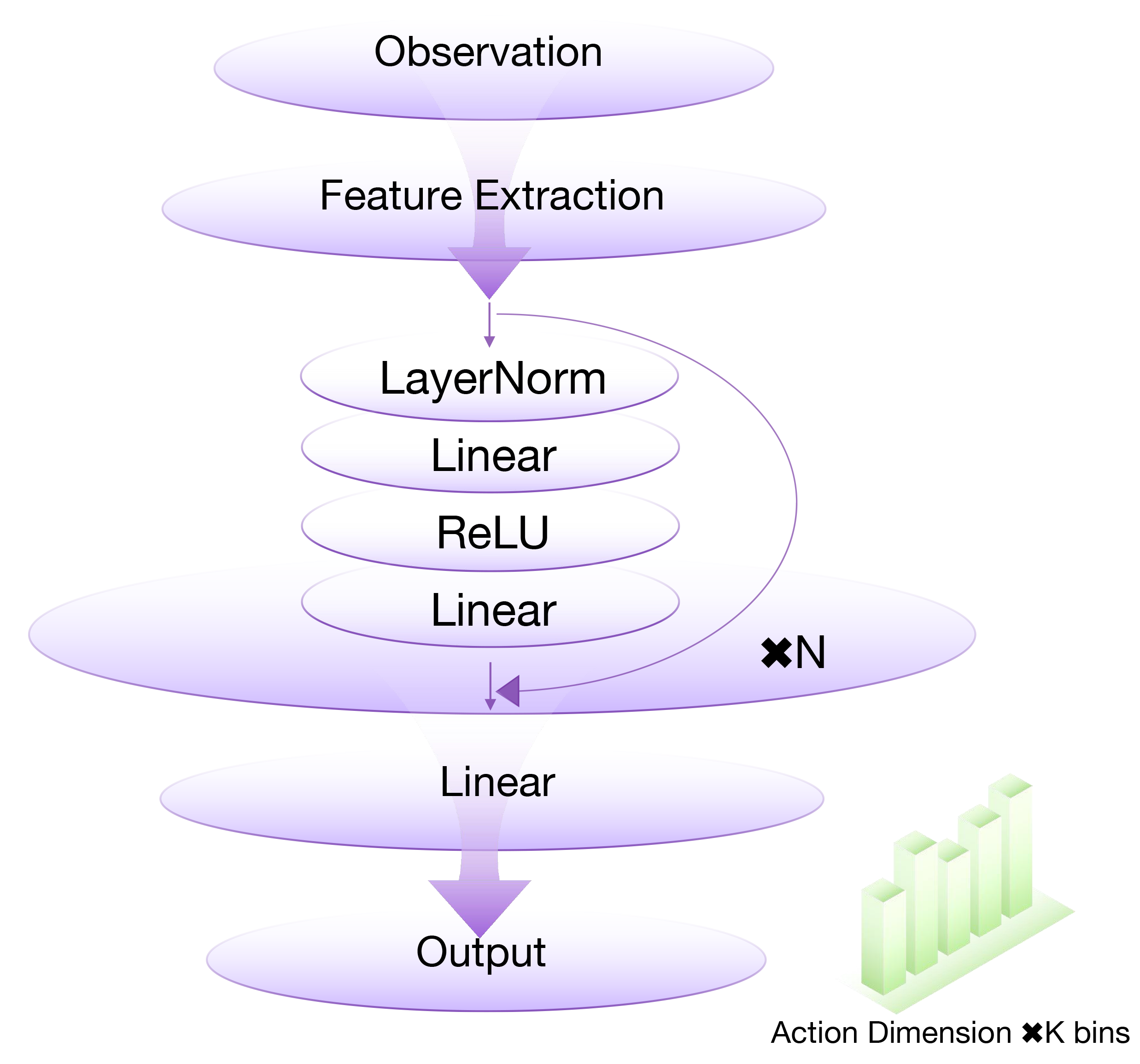}
    \caption{Proposed \textbf{R}egularized \textbf{N}etwork for \textbf{D}iscrete action policies (RN-D). The actor consists of a feature extractor (MLP or CNN) followed by pre-LayerNorm residual MLP blocks, enabling stable optimization and improved scalability. The output layer produces categorical logits over $K$ bins for each action dimension.}
    \label{fig:network}
\end{figure}
\begin{figure*}[t]
    \centering
    \includegraphics[width=1.0\linewidth]{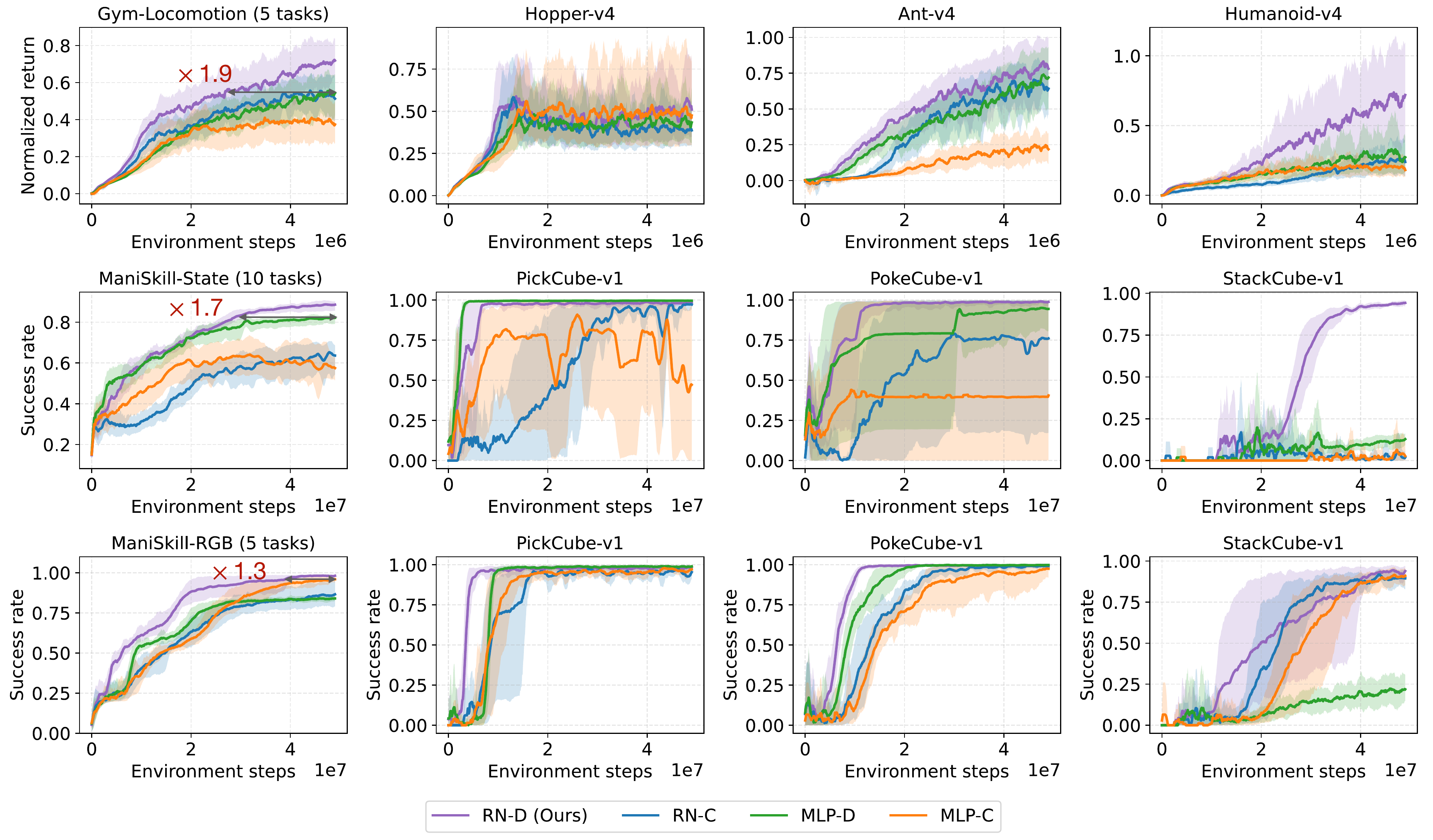}
    \vspace{-1em}
    \caption{Aggregate learning curves across benchmarks. Each subplot reports the mean performance over tasks within a benchmark (MuJoCo locomotion: normalized return; ManiSkill: success rate) as a function of environment steps. Curves are averaged over 5 random seeds; shaded regions denote 95\% stratified bootstrap confidence intervals. The red annotations indicate a sample-efficiency speedup. }
    \label{fig:total}
\end{figure*}

We adopt a unified actor network architecture for discrete categorical policies that emphasizes optimization stability and scalability across high-dimensional action spaces.
The architecture is composed of a feature extraction stage, a stack of residual feedforward blocks, and a final output projection to categorical action logits.

\paragraph{Feature Extraction.}
The actor first maps raw observations to a latent feature representation using a task-dependent encoder.
For low-dimensional state inputs, we employ a multi-layer perceptron (MLP) encoder composed of linear layers and nonlinear activations.
For high-dimensional observations (e.g., images), a convolutional neural network (CNN) encoder is used instead.
The encoder produces a shared latent representation that is consumed by subsequent feedforward blocks.

\paragraph{Residual Feedforward Blocks.}
Following feature extraction, the actor applies a stack of $N$ residual feedforward blocks.
Each block is equipped with a residual connection,
\[
\mathbf{h}_{\ell+1} = \mathbf{h}_{\ell} + \mathrm{FFN}(\mathrm{LN}(\mathbf{h}_{\ell})),
\]
where $\mathrm{LN}(\cdot)$ denotes layer normalization.
We employ pre-layer normalization by applying LayerNorm before each feedforward block, a design that improves optimization stability in deep architectures, including Transformer models~\cite{xiong2020layer}.
Following~\cite{vaswani2017attention}, the feedforward network adopts an inverted bottleneck structure.
The hidden dimension is first expanded from $d_h$ to $4d_h$ and passed through a ReLU activation, and a second linear layer projects the representation back to $d_h$.

\paragraph{Output Layer.}
The final hidden representation is passed through a linear projection that outputs logits for a discrete categorical distribution.
Specifically, for an action space with $d$ dimensions and $K$ bins per dimension, the output layer produces $d \times K$ logits, which parameterize independent categorical distributions over each action dimension.

\paragraph{Relation to Prior Architectures.}
The proposed actor architecture shares structural similarities with several existing designs.
Residual MLPs with normalization have been explored in prior reinforcement learning works such as SimBa~\cite{lee2024simba,leehyperspherical} and BroNet~\cite{nauman2024bigger}, as well as in the feedforward sublayers of Transformer models~\cite{vaswani2017attention}.
Our contribution does not lie in introducing a novel network module, but rather in demonstrating that this architectural choice is particularly well-suited for discrete categorical actor policies.
As shown in our experiments, combining our network with categorical action parameterization leads to improved optimization behavior compared to conventional shallow MLP actors commonly used in on-policy RL.

\section{Experiment}

We evaluate the proposed discrete policy with a regularized network (RN-D) under PPO across a diverse set of locomotion and manipulation benchmarks. A full list of tasks, environment details, and hyperparameters is provided in Appendix~\ref{sec:appendix_exp}.

\paragraph{Experiment Setup. }
We evaluate our methods on three environment families: Gym locomotion~\cite{brockman2016openai}, ManiSkill~\cite{taomaniskill3} with state-based observations, and ManiSkill with RGB-based observations.
These tasks cover a range of control challenges, including locomotion and dexterous manipulation, varying embodiments, and different observation modalities.
Specifically, we consider 5 standard Gym locomotion tasks, 10 ManiSkill manipulation tasks with low-dimensional state observations, and 5 ManiSkill tasks with image-based observations, where policies operate on raw RGB inputs via a convolutional encoder.

\paragraph{Baselines.}
All experiments are conducted with Proximal Policy Optimization (PPO)~\cite{schulman2017proximal}, using implementations adapted from the CleanRL~\cite{huang2022cleanrl} and ManiSkill codebases~\cite{taomaniskill3}.
We compare \textbf{RN-D}, our regularized-network discrete (categorical) actor, against three baselines:
(i) \textbf{RN-C}, a continuous policy parameterized by a factorized Gaussian with the same regularized network;
(ii) \textbf{MLP-C}\footnote{\textbf{MLP-C} refers to the widely adopted PPO actor in standard implementations.}, a continuous factorized-Gaussian policy with a standard MLP actor; and
(iii) \textbf{MLP-D}, a discrete (categorical) policy with a standard MLP actor.
Across all four settings, we keep the critic (value network) architecture identical to ensure a fair comparison.
For discrete actors, we use $K=41$ bins throughout.


\paragraph{Metrics.}
To aggregate performance across benchmarks, we use task-standard metrics and normalize returns when needed. For Gym MuJoCo locomotion, we report TD3-normalized return~\cite{fujimoto2018addressing}. For ManiSkill benchmarks, we report success rate, i.e., the fraction of evaluation episodes that achieve the task goal.

\paragraph{Overall Performance.}

Figure~\ref{fig:total} shows that \textbf{RN-D (ours)} consistently achieves the best performance across both locomotion and ManiSkill.
On Gym locomotion, RN-D attains higher normalized returns than the Gaussian baselines, with clear improvements on harder tasks such as Ant and Humanoid. On ManiSkill, RN-D yields the
highest average success on the 10 state-based tasks and is the only variant that reliably solves the challenging StackCube task, where
other methods remain near-zero success for most of the training. The same trend holds for vision-based ManiSkill tasks: RN-D maintains its
advantage when paired with CNN encoders and reaches near-saturated success more reliably than baselines.
Beyond final performance, RN-D also reaches strong performance with fewer environment steps. The red annotations indicate a sample-efficiency speedup: RN-D matches the best baseline's performance using roughly
$1.3$--$1.9\times$ fewer interaction steps. 
Comparing ablations reveals complementary effects. For state-based settings (locomotion and ManiSkill-State), RN-C versus MLP-C (blue/orange) shows that the regularized backbone improves performance even with Gaussian actors, while MLP-D versus MLP-C (green/orange) indicates that discretization alone can also be beneficial. For vision-based tasks, the CNN encoder provides a strong representation and narrows the gap among actor variants; in some cases MLP-C is already competitive. Nevertheless, RN-D still achieves the best final performance and consistently improves sample efficiency.

\paragraph{Gradient Variance.} We plot the policy-gradient variance (log scale) on five MuJoCo tasks as a representative illustration (Fig~\ref{fig:variance}). Across training, the Gaussian policy (RN-C/MLP-C) exhibits consistently larger variance, often by orders of magnitude, than its discrete categorical counterpart (RN-D/MLP-D). This disparity is most pronounced for the standard MLP actor: MLP-C (orange) stays far above MLP-D (green) and increases steadily with environment steps, which is consistent with the Gaussian actor’s standard deviation shrinking over training and amplifying gradient variability. Replacing the MLP with a regularized network reduces variance for both policy classes. Overall, the discrete categorical policy with a regularized network (RN-D, purple) achieves the lowest policy-gradient variance throughout training.
\begin{figure}[h]
    \centering
    \includegraphics[width=0.9\linewidth]{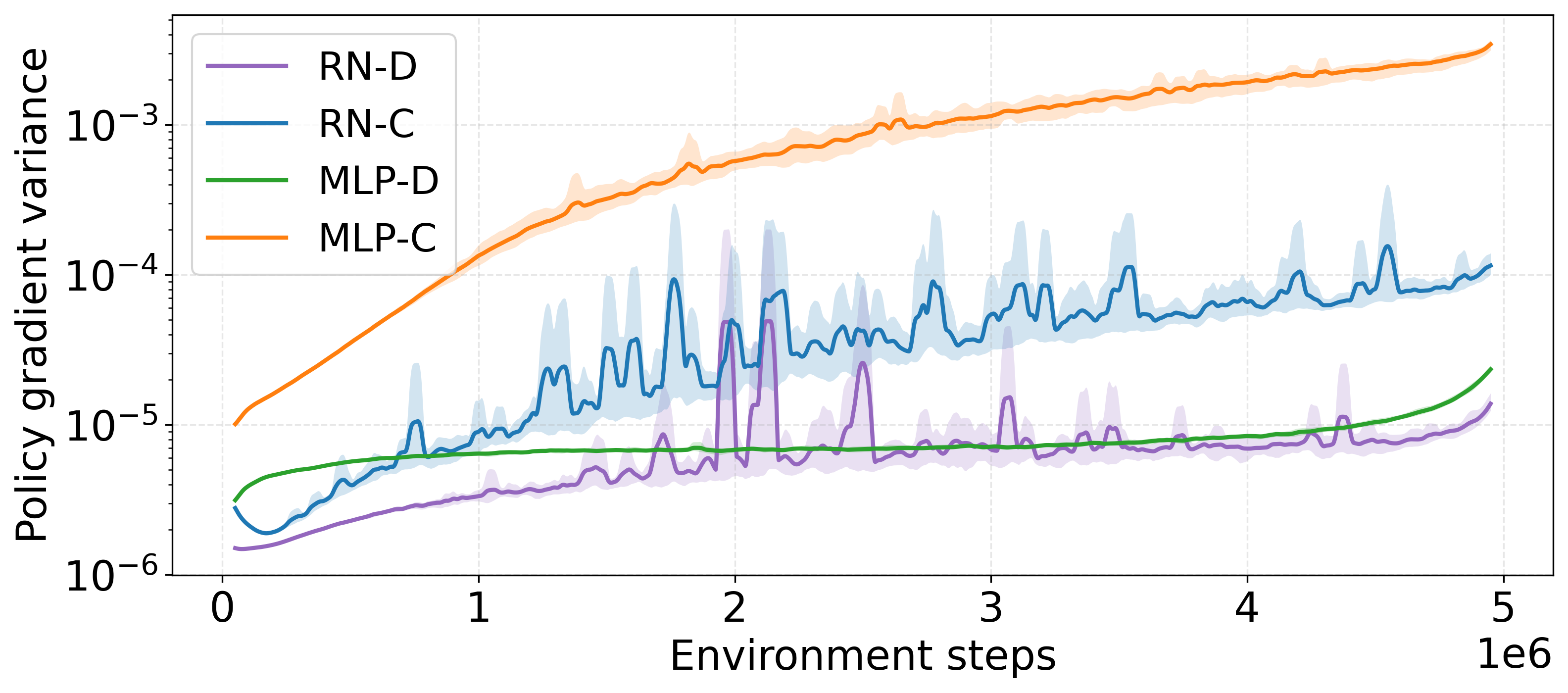}
    \caption{The evolution of the policy-gradient variance over training (log scale). 
    }
    \label{fig:variance}
\end{figure}

\paragraph{Component Analysis.}
We visualize the gradient signal-to-noise ratio (SNR) and normalized return on Gym locomotion tasks, where SNR is computed as the squared norm of the mean policy gradient divided by its variance across mini-batches during training. Following prior work~\cite{andrychowicz2020mattersonpolicyreinforcementlearning}, we report the 95th-percentile performance over training, and summarize gradient signal quality using the mean SNR. Figure~\ref{fig:snr}(a) shows that residual connections and layer normalization each improve SNR relative to a plain MLP actor. Figure~\ref{fig:snr}(b) shows a consistent trend in performance, suggesting a strong correlation between gradient signal quality (SNR) and normalized return across architectures.

\begin{figure}[h]
    \centering
    \includegraphics[width=0.9\linewidth]{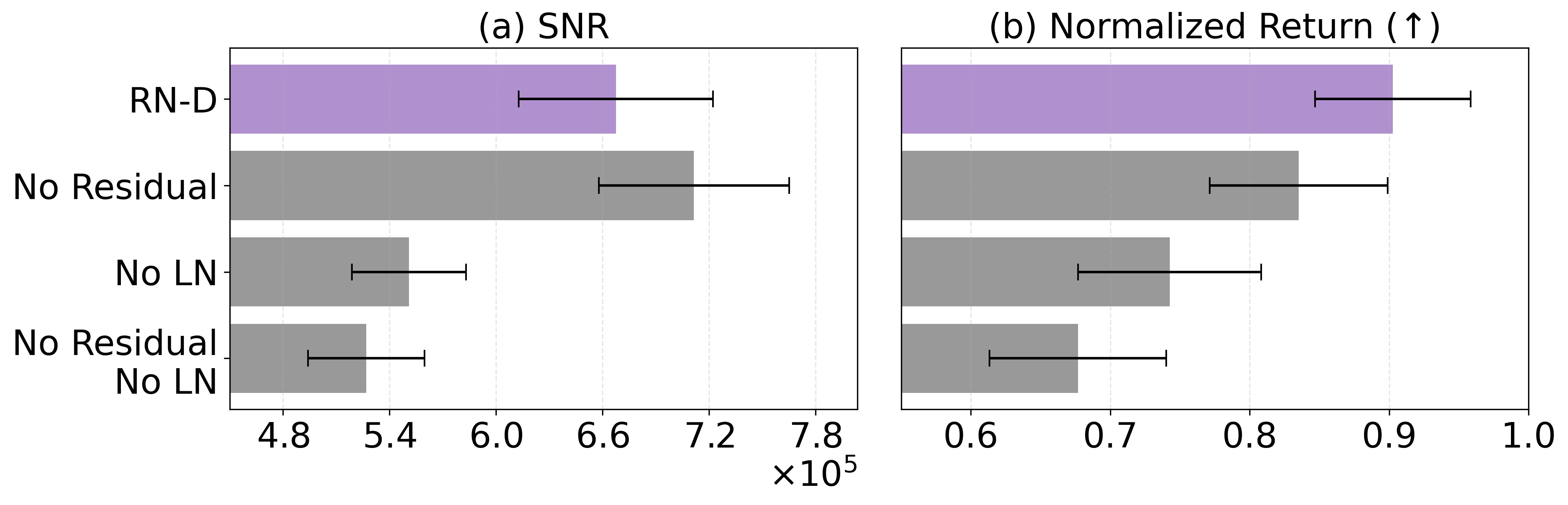}
    \caption{Component Analysis. (a) Gradient signal-to-noise ratio (SNR) on Gym locomotion tasks. Bars show the mean and error bars denote one standard deviation across tasks. (b) Average normalized return. Higher SNR correlates with higher returns, with RN-D achieving the best performance.}
    \label{fig:snr}
\end{figure}
\begin{figure*}
    \centering
    \includegraphics[width=1.0\linewidth]{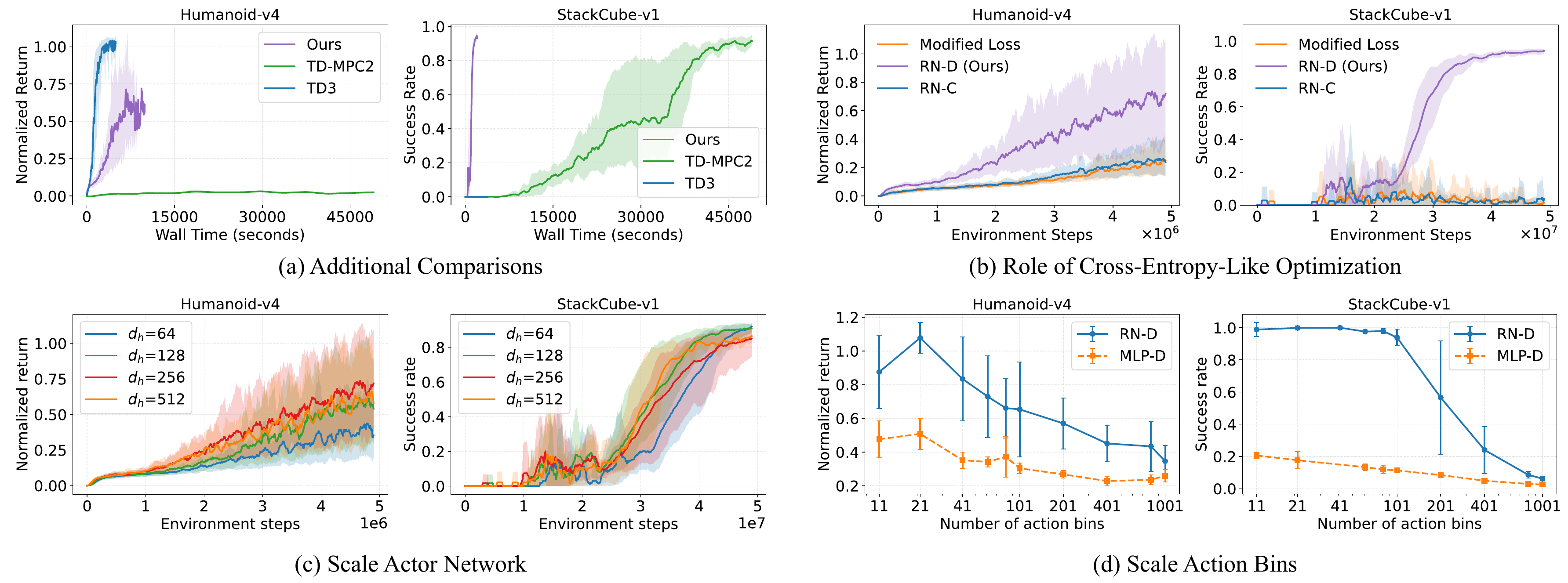}
        \vspace{-1.2em}
    \caption{Extended studies on sample efficiency, optimization objective, and scaling.}
    \label{fig:ablation}
\end{figure*}

\section{Extended Study}
\paragraph{Additional Comparisons with off-policy RL Methods.}
In Appendix~\ref{ap:sota}, we further compare our method to strong off-policy RL baselines, TD3~\cite{fujimoto2018addressing} and TD-MPC2~\cite{hansen2024tdmpc2scalablerobustworld}, on Humanoid-v4 and StackCube-v1. 
Our goal is not to claim state-of-the-art performance among all RL algorithms, but to contextualize the practical efficiency of our on-policy actor design against representative strong off-policy methods.
As shown in Figure~\ref{fig:ablation}(a), the proposed approach achieves strong wall-clock efficiency and performs well on both tasks. 
In contrast, TD-MPC2 is considerably less computationally efficient in our setting, while TD3 performs well on Humanoid-v4 but struggles on StackCube-v1.

\paragraph{Role of Cross-Entropy-Like
Optimization.}
\label{sec:loss_swap}
We introduce a loss-swap ablation in Appendix~\ref{ap:cross}. 
The proposed discrete actor continues to output logits over $K$ discretized bins per action dimension, but instead of sampling from the categorical distribution, we compute the expected action as the probability-weighted average of bin centers and treat it as the mean of a Gaussian policy. Policy updates are then performed using the standard Gaussian log-likelihood, resulting in a weighted squared-error objective.
Figure~\ref{fig:ablation} (b) compares RN-D, RN-C, and the loss-swapped variant.
These results show that discretizing the action space alone is insufficient to explain the observed gains.
Even with the same discretized representation, switching from a cross-entropy-like objective to a squared-error objective eliminates most of the performance improvement, highlighting the central role of the categorical likelihood objective.

\paragraph{Scaling the actor network.}
In Appendix~\ref{ap:scale_actor}, we keep the critic fixed and scale the RN-D actor by varying its hidden width from $d_h=64$ to $512$.
We find that wider actors consistently converge faster on both Humanoid-v4 and StackCube-v1, indicating that increased capacity in the proposed RN-D policy can be effectively leveraged to improve learning speed under the same PPO training protocol.

\paragraph{Scaling the discrete bins.}
In Appendix~\ref{ap:bin}, we conduct an ablation study on discretization granularity by varying the number of action bins from 11 to 1001, with 41 bins used in the main experiments. 
We find that moderate bin counts generally perform best, while performance often degrades as the number of bins becomes very large. 
The effective range can also depend on the task: in our experiments, performance is often stable around 11--101 bins, but the optimal choice varies across environments. 
Overall, discrete-policy performance is sensitive to bin granularity: small-to-moderate bin counts preserve the optimization benefits of discretization, whereas very large bin counts require overly fine-grained predictions that may exceed policy capacity and lead to performance degradation.

\paragraph{Runtime analysis.}
RN-D improves performance without adding meaningful runtime overhead. As shown in Appendix~\ref{app:runtime}, the cost of increasing the bin count $K$ saturates quickly, suggesting that fine-grained discretization is not the main computational bottleneck once $K$ is moderately large. Table~\ref{tab:method_runtime1} further shows that MLP-D and RN-D achieve comparable or slightly better throughput than their continuous counterparts, with similar wall-clock time. These results indicate that discretization improves policy representation while preserving practical training efficiency, making it a simple and scalable drop-in replacement for standard continuous actors.

\begin{table}[h]
\centering
\caption{Wall-clock time and throughput, measured by samples per second (SPS), averaged across 5 Gym locomotion tasks with 5 seeds each.}
\label{tab:method_runtime1}
\begin{tabular}{lrr}
\toprule
Method & Wall-clock Time (h) & SPS \\
\midrule
MLP-C & 1.65 & 854.4 \\
MLP-D & 1.56 & 925.3 \\
RN-C  & 1.75 & 811.1 \\
RN-D  & 1.64 & 873.8 \\
\bottomrule
\end{tabular}
\end{table}

\section{Conclusion}
We revisited actor design for on-policy reinforcement learning in continuous control and showed that policy parameterization and architecture provide a strong, underexplored lever for improving PPO-style training. Our approach replaces the standard diagonal-Gaussian actor with a discretized categorical policy over action bins and pairs it with a regularized actor network, while keeping the critic and the underlying on-policy objective fixed. Across locomotion benchmarks and ManiSkill (state- and vision-based) tasks, this simple actor-side change consistently improves final performance, accelerates convergence, and yields more stable learning. 

Several directions appear promising. First, future work can further extend our study beyond PPO by evaluating discretized categorical actors under a broader set of on-policy algorithms. Second, while this work isolates actor-side effects by keeping the critic fixed, understanding actor--critic interactions remains an important direction. Jointly scaling or regularizing both the actor and critic may reveal more symmetric design principles for stable on-policy optimization. Finally, we envision applying this approach to more complex embodied settings, including long-horizon manipulation, contact-rich control, and multi-object tasks, to better characterize its benefits and limitations in challenging real-world regimes.

\section*{Impact Statement}
This paper presents work whose goal is to advance the field of reinforcement learning. There are many potential societal consequences of our work, none of which we feel must be specifically highlighted here.

\section*{Acknowledgements}
We thank Haiwen Yu for the contributions to the framework visualization design.
This work was supported in part by the U.S. Department of Energy under Grant DE-SC0025495, the Schmidt Sciences AI2050 Early Career Fellowship, and the National Science Foundation under Grant No. 2442689.

\bibliography{ref}

@article{schulman2017proximal,
  title={Proximal policy optimization algorithms},
  author={Schulman, John and Wolski, Filip and Dhariwal, Prafulla and Radford, Alec and Klimov, Oleg},
  journal={arXiv preprint arXiv:1707.06347},
  year={2017}
}

@inproceedings{tang2020discretizing,
  title={Discretizing continuous action space for on-policy optimization},
  author={Tang, Yunhao and Agrawal, Shipra},
  booktitle={Proceedings of the aaai conference on artificial intelligence},
  volume={34},
  number={04},
  pages={5981--5988},
  year={2020}
}

@article{zhu2024discretizing,
  title={Discretizing Continuous Action Space With Unimodal Probability Distributions for On-Policy Reinforcement Learning},
  author={Zhu, Yuanyang and Wang, Zhi and Zhu, Yuanheng and Chen, Chunlin and Zhao, Dongbin},
  journal={IEEE Transactions on Neural Networks and Learning Systems},
  year={2024},
  publisher={IEEE}
}

@article{schulman2015high,
  title={High-dimensional continuous control using generalized advantage estimation},
  author={Schulman, John and Moritz, Philipp and Levine, Sergey and Jordan, Michael and Abbeel, Pieter},
  journal={arXiv preprint arXiv:1506.02438},
  year={2015}
}

@inproceedings{verl2025,
author = {Sheng, Guangming and Zhang, Chi and Ye, Zilingfeng and Wu, Xibin and Zhang, Wang and Zhang, Ru and Peng, Yanghua and Lin, Haibin and Wu, Chuan},
title = {HybridFlow: A Flexible and Efficient RLHF Framework},
year = {2025},
isbn = {9798400711961},
publisher = {Association for Computing Machinery},
address = {New York, NY, USA},
booktitle = {Proceedings of the Twentieth European Conference on Computer Systems},
pages = {1279–1297},
numpages = {19},
location = {Rotterdam, Netherlands},
series = {EuroSys '25}
}

@article{hafner2025mastering,
  title={Mastering diverse control tasks through world models},
  author={Hafner, Danijar and Pasukonis, Jurgis and Ba, Jimmy and Lillicrap, Timothy},
  journal={Nature},
  pages={1--7},
  year={2025},
  publisher={Nature Publishing Group UK London}
}

@inproceedings{huang202237,
  author = {Huang, Shengyi and Dossa, Rousslan Fernand Julien and Raffin, Antonin and Kanervisto, Anssi and Wang, Weixun},
  title = {The 37 Implementation Details of Proximal Policy Optimization},
  booktitle = {ICLR Blog Track},
  year = {2022},
  note = {https://iclr-blog-track.github.io/2022/03/25/ppo-implementation-details/},
  url  = {https://iclr-blog-track.github.io/2022/03/25/ppo-implementation-details/}
}

@InProceedings{pmlr-v70-bellemare17a,
  title = 	 {A Distributional Perspective on Reinforcement Learning},
  author =       {Marc G. Bellemare and Will Dabney and R{\'e}mi Munos},
  booktitle = 	 {Proceedings of the 34th International Conference on Machine Learning},
  pages = 	 {449--458},
  year = 	 {2017},
  editor = 	 {Precup, Doina and Teh, Yee Whye},
  volume = 	 {70},
  series = 	 {Proceedings of Machine Learning Research},
  month = 	 {06--11 Aug},
  publisher =    {PMLR},
  url = 	 {https://proceedings.mlr.press/v70/bellemare17a.html}
}

@inproceedings{
yue2025does,
title={Does Reinforcement Learning Really Incentivize Reasoning Capacity in {LLM}s Beyond the Base Model?},
author={Yang Yue and Zhiqi Chen and Rui Lu and Andrew Zhao and Zhaokai Wang and Yang Yue and Shiji Song and Gao Huang},
booktitle={The Thirty-ninth Annual Conference on Neural Information Processing Systems},
year={2025},
url={https://openreview.net/forum?id=4OsgYD7em5}
}

@misc{yu2025dapo,
      title={DAPO: An Open-Source LLM Reinforcement Learning System at Scale}, 
      author={Qiying Yu and Zheng Zhang and Ruofei Zhu and Yufeng Yuan and Xiaochen Zuo and Yu Yue and Tiantian Fan and Gaohong Liu and Lingjun Liu and Xin Liu and Haibin Lin and Zhiqi Lin and Bole Ma and Guangming Sheng and Yuxuan Tong and Chi Zhang and Mofan Zhang and Wang Zhang and Hang Zhu and Jinhua Zhu and Jiaze Chen and Jiangjie Chen and Chengyi Wang and Hongli Yu and Weinan Dai and Yuxuan Song and Xiangpeng Wei and Hao Zhou and Jingjing Liu and Wei-Ying Ma and Ya-Qin Zhang and Lin Yan and Mu Qiao and Yonghui Wu and Mingxuan Wang},
      year={2025},
      eprint={2503.14476},
      archivePrefix={arXiv},
      primaryClass={cs.LG},
      url={https://arxiv.org/abs/2503.14476}, 
}

@article{guo2025deepseekr1,
  title={Deepseek-r1: Incentivizing reasoning capability in llms via reinforcement learning},
  author={Guo, Daya and Yang, Dejian and Zhang, Haowei and Song, Junxiao and Zhang, Ruoyu and Xu, Runxin and Zhu, Qihao and Ma, Shirong and Wang, Peiyi and Bi, Xiao and others},
  journal={arXiv preprint arXiv:2501.12948},
  year={2025}
}

@article{agarwal2020theory,
  author  = {Alekh Agarwal and Sham M. Kakade and Jason D. Lee and Gaurav Mahajan},
  title   = {On the Theory of Policy Gradient Methods: Optimality, Approximation, and Distribution Shift},
  journal = {Journal of Machine Learning Research},
  year    = {2021},
  volume  = {22},
  number  = {98},
  pages   = {1--76},
  url     = {http://jmlr.org/papers/v22/19-736.html}
}

@article{xie2024simple,
  title={Simple policy optimization},
  author={Xie, Zhengpeng and Zhang, Qiang and Yang, Fan and Hutter, Marco and Xu, Renjing},
  journal={arXiv preprint arXiv:2401.16025},
  year={2024}
}

@inproceedings{
wang2025improving,
title={Improving Value Estimation Critically Enhances Vanilla Policy Gradient},
author={Tao Wang and Ruipeng Zhang and Sicun Gao},
booktitle={Forty-second International Conference on Machine Learning},
year={2025},
url={https://openreview.net/forum?id=Nq3oz7vn3j}
}

@book{sutton1998reinforcement,
  title={Reinforcement learning: An introduction},
  author={Sutton, Richard S and Barto, Andrew G and others},
  volume={1},
  number={1},
  year={1998},
  publisher={MIT press Cambridge}
}

@InProceedings{pmlr-v48-mniha16,
  title = 	 {Asynchronous Methods for Deep Reinforcement Learning},
  author = 	 {Mnih, Volodymyr and Badia, Adria Puigdomenech and Mirza, Mehdi and Graves, Alex and Lillicrap, Timothy and Harley, Tim and Silver, David and Kavukcuoglu, Koray},
  booktitle = 	 {Proceedings of The 33rd International Conference on Machine Learning},
  pages = 	 {1928--1937},
  year = 	 {2016},
  editor = 	 {Balcan, Maria Florina and Weinberger, Kilian Q.},
  volume = 	 {48},
  series = 	 {Proceedings of Machine Learning Research},
  address = 	 {New York, New York, USA},
  month = 	 {20--22 Jun},
  publisher =    {PMLR},
}

@InProceedings{pmlr-v37-schulman15,
  title = 	 {Trust Region Policy Optimization},
  author = 	 {Schulman, John and Levine, Sergey and Abbeel, Pieter and Jordan, Michael and Moritz, Philipp},
  booktitle = 	 {Proceedings of the 32nd International Conference on Machine Learning},
  pages = 	 {1889--1897},
  year = 	 {2015},
  editor = 	 {Bach, Francis and Blei, David},
  volume = 	 {37},
  series = 	 {Proceedings of Machine Learning Research},
  address = 	 {Lille, France},
  month = 	 {07--09 Jul},
  publisher =    {PMLR},
}

@article{farebrother2024stop,
  title={Stop regressing: Training value functions via classification for scalable deep rl},
  author={Farebrother, Jesse and Orbay, Jordi and Vuong, Quan and Ta{\"\i}ga, Adrien Ali and Chebotar, Yevgen and Xiao, Ted and Irpan, Alex and Levine, Sergey and Castro, Pablo Samuel and Faust, Aleksandra and others},
  journal={arXiv preprint arXiv:2403.03950},
  year={2024}
}

@article{lee2025hyperspherical,
  title={Hyperspherical normalization for scalable deep reinforcement learning},
  author={Lee, Hojoon and Lee, Youngdo and Seno, Takuma and Kim, Donghu and Stone, Peter and Choo, Jaegul},
  journal={arXiv preprint arXiv:2502.15280},
  year={2025}
}

@article{vaswani2017attention,
  title={Attention is all you need},
  author={Vaswani, Ashish and Shazeer, Noam and Parmar, Niki and Uszkoreit, Jakob and Jones, Llion and Gomez, Aidan N and Kaiser, {\L}ukasz and Polosukhin, Illia},
  journal={Advances in neural information processing systems},
  volume={30},
  year={2017}
}

@inproceedings{xiong2020layer,
  title={On layer normalization in the transformer architecture},
  author={Xiong, Ruibin and Yang, Yunchang and He, Di and Zheng, Kai and Zheng, Shuxin and Xing, Chen and Zhang, Huishuai and Lan, Yanyan and Wang, Liwei and Liu, Tieyan},
  booktitle={International conference on machine learning},
  pages={10524--10533},
  year={2020},
  organization={PMLR}
}

@article{lee2024simba,
  title={Simba: Simplicity bias for scaling up parameters in deep reinforcement learning},
  author={Lee, Hojoon and Hwang, Dongyoon and Kim, Donghu and Kim, Hyunseung and Tai, Jun Jet and Subramanian, Kaushik and Wurman, Peter R and Choo, Jaegul and Stone, Peter and Seno, Takuma},
  journal={arXiv preprint arXiv:2410.09754},
  year={2024}
}

@article{nauman2024bigger,
  title={Bigger, regularized, optimistic: scaling for compute and sample efficient continuous control},
  author={Nauman, Michal and Ostaszewski, Mateusz and Jankowski, Krzysztof and Mi{\l}o{\'s}, Piotr and Cygan, Marek},
  journal={Advances in neural information processing systems},
  volume={37},
  pages={113038--113071},
  year={2024}
}

@inproceedings{leehyperspherical,
  title={Hyperspherical Normalization for Scalable Deep Reinforcement Learning},
  author={Lee, Hojoon and Lee, Youngdo and Seno, Takuma and Kim, Donghu and Stone, Peter and Choo, Jaegul},
  booktitle={Forty-second International Conference on Machine Learning}
}

@article{taomaniskill3,
  title={ManiSkill3: GPU Parallelized Robotics Simulation and Rendering for Generalizable Embodied AI},
  author={Stone Tao and Fanbo Xiang and Arth Shukla and Yuzhe Qin and Xander Hinrichsen and Xiaodi Yuan and Chen Bao and Xinsong Lin and Yulin Liu and Tse-kai Chan and Yuan Gao and Xuanlin Li and Tongzhou Mu and Nan Xiao and Arnav Gurha and Viswesh Nagaswamy Rajesh and Yong Woo Choi and Yen-Ru Chen and Zhiao Huang and Roberto Calandra and Rui Chen and Shan Luo and Hao Su},
  journal = {Robotics: Science and Systems},
  year={2025},
}

@article{brockman2016openai,
  title={Openai gym},
  author={Brockman, Greg and Cheung, Vicki and Pettersson, Ludwig and Schneider, Jonas and Schulman, John and Tang, Jie and Zaremba, Wojciech},
  journal={arXiv preprint arXiv:1606.01540},
  year={2016}
}

@misc{andrychowicz2020mattersonpolicyreinforcementlearning,
      title={What Matters In On-Policy Reinforcement Learning? A Large-Scale Empirical Study}, 
      author={Marcin Andrychowicz and Anton Raichuk and Piotr Stańczyk and Manu Orsini and Sertan Girgin and Raphael Marinier and Léonard Hussenot and Matthieu Geist and Olivier Pietquin and Marcin Michalski and Sylvain Gelly and Olivier Bachem},
      year={2020},
      eprint={2006.05990},
      archivePrefix={arXiv},
      primaryClass={cs.LG},
      url={https://arxiv.org/abs/2006.05990}, 
}

@article{huang2022cleanrl,
  author  = {Shengyi Huang and Rousslan Fernand Julien Dossa and Chang Ye and Jeff Braga and Dipam Chakraborty and Kinal Mehta and João G.M. Araújo},
  title   = {CleanRL: High-quality Single-file Implementations of Deep Reinforcement Learning Algorithms},
  journal = {Journal of Machine Learning Research},
  year    = {2022},
  volume  = {23},
  number  = {274},
  pages   = {1--18},
  url     = {http://jmlr.org/papers/v23/21-1342.html}
}

@article{williams1992simple,
  title={Simple statistical gradient-following algorithms for connectionist reinforcement learning},
  author={Williams, Ronald J},
  journal={Machine learning},
  volume={8},
  number={3},
  pages={229--256},
  year={1992},
  publisher={Springer}
}

@article{sutton1999policy,
  title={Policy gradient methods for reinforcement learning with function approximation},
  author={Sutton, Richard S and McAllester, David and Singh, Satinder and Mansour, Yishay},
  journal={Advances in neural information processing systems},
  volume={12},
  year={1999}
}

@inproceedings{mnih2016asynchronous,
  title={Asynchronous methods for deep reinforcement learning},
  author={Mnih, Volodymyr and Badia, Adria Puigdomenech and Mirza, Mehdi and Graves, Alex and Lillicrap, Timothy and Harley, Tim and Silver, David and Kavukcuoglu, Koray},
  booktitle={International conference on machine learning},
  pages={1928--1937},
  year={2016},
  organization={PmLR}
}

@inproceedings{cobbe2021phasic,
  title={Phasic policy gradient},
  author={Cobbe, Karl W and Hilton, Jacob and Klimov, Oleg and Schulman, John},
  booktitle={International Conference on Machine Learning},
  pages={2020--2027},
  year={2021},
  organization={PMLR}
}

@article{wu2017scalable,
  title={Scalable trust-region method for deep reinforcement learning using kronecker-factored approximation},
  author={Wu, Yuhuai and Mansimov, Elman and Grosse, Roger B and Liao, Shun and Ba, Jimmy},
  journal={Advances in neural information processing systems},
  volume={30},
  year={2017}
}

@inproceedings{schulman2015trust,
  title={Trust region policy optimization},
  author={Schulman, John and Levine, Sergey and Abbeel, Pieter and Jordan, Michael and Moritz, Philipp},
  booktitle={International conference on machine learning},
  pages={1889--1897},
  year={2015},
  organization={PMLR}
}

@inproceedings{gronauer2021successful,
  title={The Successful Ingredients of Policy Gradient Algorithms.},
  author={Gronauer, Sven and Gottwald, Martin and Diepold, Klaus},
  booktitle={IJCAI},
  pages={2455--2461},
  year={2021}
}

@inproceedings{chou2017improving,
  title={Improving stochastic policy gradients in continuous control with deep reinforcement learning using the beta distribution},
  author={Chou, Po-Wei and Maturana, Daniel and Scherer, Sebastian},
  booktitle={International conference on machine learning},
  pages={834--843},
  year={2017},
  organization={PMLR}
}

@inproceedings{hamalainen2020ppo,
  title={PPO-CMA: Proximal policy optimization with covariance matrix adaptation},
  author={H{\"a}m{\"a}l{\"a}inen, Perttu and Babadi, Amin and Ma, Xiaoxiao and Lehtinen, Jaakko},
  booktitle={2020 IEEE 30th International Workshop on Machine Learning for Signal Processing (MLSP)},
  pages={1--6},
  year={2020},
  organization={IEEE}
}

@article{seyde2021bang,
  title={Is bang-bang control all you need? solving continuous control with bernoulli policies},
  author={Seyde, Tim and Gilitschenski, Igor and Schwarting, Wilko and Stellato, Bartolomeo and Riedmiller, Martin and Wulfmeier, Markus and Rus, Daniela},
  journal={Advances in Neural Information Processing Systems},
  volume={34},
  pages={27209--27221},
  year={2021}
}

@inproceedings{he2016deep,
  title={Deep residual learning for image recognition},
  author={He, Kaiming and Zhang, Xiangyu and Ren, Shaoqing and Sun, Jian},
  booktitle={Proceedings of the IEEE conference on computer vision and pattern recognition},
  pages={770--778},
  year={2016}
}

@inproceedings{ioffe2015batch,
  title={Batch normalization: Accelerating deep network training by reducing internal covariate shift},
  author={Ioffe, Sergey and Szegedy, Christian},
  booktitle={International conference on machine learning},
  pages={448--456},
  year={2015},
  organization={pmlr}
}

@article{ba2016layer,
  title={Layer normalization},
  author={Ba, Jimmy Lei and Kiros, Jamie Ryan and Hinton, Geoffrey E},
  journal={arXiv preprint arXiv:1607.06450},
  year={2016}
}

@article{nauman2025bigger,
  title={Bigger, Regularized, Categorical: High-Capacity Value Functions are Efficient Multi-Task Learners},
  author={Nauman, Michal and Cygan, Marek and Sferrazza, Carmelo and Kumar, Aviral and Abbeel, Pieter},
  journal={arXiv preprint arXiv:2505.23150},
  year={2025}
}

@misc{hansen2024tdmpc2scalablerobustworld,
      title={TD-MPC2: Scalable, Robust World Models for Continuous Control}, 
      author={Nicklas Hansen and Hao Su and Xiaolong Wang},
      year={2024},
      eprint={2310.16828},
      archivePrefix={arXiv},
      primaryClass={cs.LG},
      url={https://arxiv.org/abs/2310.16828}, 
}

@misc{hansen2025learningmassivelymultitaskworld,
      title={Learning Massively Multitask World Models for Continuous Control}, 
      author={Nicklas Hansen and Hao Su and Xiaolong Wang},
      year={2025},
      eprint={2511.19584},
      archivePrefix={arXiv},
      primaryClass={cs.LG},
      url={https://arxiv.org/abs/2511.19584}, 
}

@inproceedings{fujimoto2018addressing,
  title={Addressing function approximation error in actor-critic methods},
  author={Fujimoto, Scott and Hoof, Herke and Meger, David},
  booktitle={International conference on machine learning},
  pages={1587--1596},
  year={2018},
  organization={PMLR}
}
\bibliographystyle{icml2025}

\appendix
\onecolumn

\setcounter{section}{0}
\renewcommand{\thesection}{\Alph{section}}
\setcounter{secnumdepth}{2}

\section{Proof for Proposition~\ref{prop:score_variance}}~\label{prop:score_variance_proof}

\begin{proof}
Fix a state $s$ and write $\hat g(\theta)=R\,\nabla_\theta \log \pi_\theta(a\mid s)$ with $a\sim \pi_\theta(\cdot\mid s)$ and constant $R$.
In both cases below we use the standard identity
\begin{equation}
\mathbb{E}\big[\nabla_\theta \log \pi_\theta(a\mid s)\mid s\big]
=\nabla_\theta \int \pi_\theta(a\mid s)\,da
=\nabla_\theta 1
=0,
\label{eq:score_zero_mean}
\end{equation}
so $\mathbb{E}[\hat g\mid s]=0$ and therefore the conditional covariance satisfies
$\mathrm{Cov}(\hat g\mid s)=\mathbb{E}[\hat g\hat g^\top\mid s]$ and, in particular,
$\mathbb{E}[\|\hat g\|_2^2\mid s]=\mathrm{Tr}(\mathrm{Cov}(\hat g\mid s))$.

\paragraph{[1] Gaussian policy (gradient w.r.t.\ mean).}
Let $\pi^c(a\mid s)=\mathcal{N}(\mu,\Sigma)$ with fixed diagonal $\Sigma=\mathrm{Diag}(\sigma^2)$ and parameter $\mu\in\mathbb{R}^m$.
The log-density is
\[
\log \pi^c(a\mid s)= -\tfrac12 (a-\mu)^\top \Sigma^{-1}(a-\mu) + C,
\]
hence
\begin{equation}
\nabla_\mu \log \pi^c(a\mid s)=\Sigma^{-1}(a-\mu).
\label{eq:gauss_score_mu}
\end{equation}
Therefore $\hat g_\mu = R\,\Sigma^{-1}(a-\mu)$ and
\begin{align*}
\mathbb{E}\!\big[\|\hat g_\mu\|_2^2 \mid s\big]
&= R^2\,\mathbb{E}\!\big[(a-\mu)^\top \Sigma^{-2}(a-\mu)\mid s\big] \\
&= R^2\,\mathrm{Tr}\!\Big(\Sigma^{-2}\,\mathbb{E}\big[(a-\mu)(a-\mu)^\top\mid s\big]\Big) \\
&= R^2\,\mathrm{Tr}\!\big(\Sigma^{-2}\Sigma\big)
= R^2\,\mathrm{Tr}(\Sigma^{-1}).
\end{align*}
For diagonal $\Sigma=\mathrm{Diag}(\sigma^2)$, $\mathrm{Tr}(\Sigma^{-1})=\sum_{i=1}^m \sigma_i^{-2}$, yielding~\eqref{eq:gauss_cov}.

\paragraph{[2] Categorical policy (gradient w.r.t.\ logits).}
Consider one action dimension $i$ with logits $z_i\in\mathbb{R}^K$ and softmax probabilities
$p_i=\mathrm{softmax}(z_i)\in\Delta^{K-1}$.
Let $j_i\sim \mathrm{Cat}(p_i)$ and denote the sampled one-hot vector by $e_{j_i}\in\mathbb{R}^K$.
For the log-probability $\log \pi^d(a^i_{j_i}\mid s)=\log p_{i,j_i}$, the softmax score w.r.t.\ logits is
\begin{equation}
\nabla_{z_i}\log p_{i,j_i} \;=\; e_{j_i}-p_i.
\label{eq:softmax_score}
\end{equation}
Hence the per-dimension REINFORCE gradient is $\hat g_{z_i}=R\,(e_{j_i}-p_i)$, and its conditional second moment is
\begin{align*}
\mathbb{E}\!\big[\|\hat g_{z_i}\|_2^2\mid s\big]
&=R^2\,\mathbb{E}\!\big[\|e_{j_i}-p_i\|_2^2\mid s\big] \\
&=R^2\,\mathbb{E}\!\big[\|e_{j_i}\|_2^2+\|p_i\|_2^2-2\,e_{j_i}^\top p_i \mid s\big] \\
&=R^2\,\Big(1+\|p_i\|_2^2-2\,\mathbb{E}[p_{i,j_i}\mid s]\Big) \\
&=R^2\,\Big(1+\|p_i\|_2^2-2\sum_{k=1}^K p_{i,k}^2\Big)
=R^2\,(1-\|p_i\|_2^2).
\end{align*}
Now assume the $m$ action dimensions factorize:
$\pi^d(a\mid s)=\prod_{i=1}^m \pi^d(a^i_{j_i}\mid s)$, so the full log-probability is the sum across $i$ and
the full gradient w.r.t.\ $z=[z_1;\dots;z_m]$ is the concatenation of per-dimension gradients:
$\hat g_z = [\hat g_{z_1};\dots;\hat g_{z_m}]$.
Therefore $\|\hat g_z\|_2^2=\sum_{i=1}^m \|\hat g_{z_i}\|_2^2$, and
\begin{equation}
\mathbb{E}\!\left[\|\hat g_z\|_2^2 \mid s\right]
= \sum_{i=1}^m \mathbb{E}\!\big[\|\hat g_{z_i}\|_2^2\mid s\big]
= R^2\sum_{i=1}^m \big(1-\|p_i(s)\|_2^2\big),
\end{equation}
which proves the first equality in~\eqref{eq:cat_m_dim_second_moment}.

For the upper bound, for any $p\in\Delta^{K-1}$,
Cauchy--Schwarz gives $\|p\|_2^2 \ge \frac{1}{K}(\|p\|_1)^2=\frac{1}{K}$, hence
$1-\|p\|_2^2\le 1-\frac{1}{K}$.
Applying this to each $p_i(s)$ yields
\[
\mathbb{E}\!\left[\|\hat g_z\|_2^2 \mid s\right]
\le R^2\sum_{i=1}^m \Big(1-\frac{1}{K}\Big)
= mR^2\Big(1-\frac{1}{K}\Big).
\]
Moreover, $\|p\|_2^2 = 1/K$ holds if and only if $p$ is uniform over the $K$ bins, so the inequality is tight
if and only if $p_i(s)$ is uniform for all $i$.
\end{proof}


\section{Experimental Details}
\label{sec:appendix_exp}

\subsection{Hyperparameters}
\label{sec:appendix_hparams}
Our implementation is based on the \texttt{clean\_rl} PPO codebase~\cite{huang2022cleanrl} and the ManiSkill benchmark suite~\cite{taomaniskill3}. Unless otherwise specified, we adopt the default hyperparameters from these codebases without task-specific retuning. For completeness, we report the full set of default parameters in Table~\ref{tab:ppo_hparams}. For RGB-based tasks, we follow the benchmark suite and use a NatureCNN feature extractor to encode observations into a latent representation, whose dimensionality is given by the extractor’s output feature size.

\begin{table}[h]
\centering
\caption{\textbf{Default PPO hyperparameters.} We report the default configurations used for each benchmark family.}
\label{tab:ppo_hparams}
\vspace{2pt}
\small
\setlength{\tabcolsep}{6pt}
\begin{tabular}{llccc}
\toprule
\textbf{Category} & \textbf{Hyperparameter} & \textbf{Gym} & \textbf{ManiSkill (state)} & \textbf{ManiSkill (RGB)} \\
\midrule
\multirow{9}{*}{Common}
& Discount factor $\gamma$          & 0.99 & 0.80 & 0.80 \\
& GAE parameter $\lambda$           & 0.95 & 0.90 & 0.90 \\
& PPO clipping $\epsilon$           & 0.2  & 0.2  & 0.2  \\
& Value loss coefficient            & 0.5  & 0.5  & 0.5  \\
& Entropy coefficient               & 0    & 0    & 0    \\
& Max grad norm                     & 0.5  & 0.5  & 0.5  \\
& Num epochs per update             & 10   & 4    & 8    \\
& Num minibatches                   & 64   & 32   & 32   \\
& Advantage normalization           & True  & True & True \\
& Num envs  &   16 & 4096 & 1024 \\
& Num steps & 1024  & 50  & 50 \\
& Total timesteps & 5M & 50M & 50M \\
\midrule
\multirow{3}{*}{Optimization}
& Optimizer                         & Adam & Adam & Adam \\
& Learning rate                     & $3\times10^{-4}$ & $3\times10^{-4}$ & $3\times10^{-4}$ \\
& Weight decay                      & $1\times10^{-5}$ & $1\times10^{-5}$ & $1\times10^{-5}$ \\
\midrule
\multirow{3}{*}{Critic Network}
& Hidden dim         & 64  & 256 & 512  \\
& Activation function  & Tanh & Tanh & ReLU  \\
& Number of Hidden Layers   & 1  & 2  & 1 \\ 
\midrule
\multirow{3}{*}{Actor Network (MLP)}
& Hidden dim         & 256  & 256 & 512  \\
& Activation function  & Tanh & Tanh & ReLU  \\
& Number of Hidden Layers   & 2  & 2  & 1 \\  \midrule
\multirow{3}{*}{Actor Network (RN)}
& Hidden dim $d_h$         & 256  & 128 & 512  \\
& Number of Blocks $N$  & 2  & 2  & 1 \\ 
\bottomrule
\end{tabular}
\end{table}

Beyond PPO, we evaluate the same actor--critic architecture with two additional on-policy objectives, Trust Region Policy Optimization (TRPO) and Simple Policy Optimization (SPO), to test whether our actor-design findings transfer beyond the clipped surrogate. Both variants reuse the PPO rollout pipeline, GAE estimation ($\gamma=0.99$, $\lambda=0.95$), advantage normalization, and network architectures; \textbf{only the policy update is changed}.

For TRPO, we replace the clipped surrogate with a natural-gradient update computed by conjugate gradient using Fisher-vector products from the analytical KL Hessian, followed by backtracking line search to enforce $\delta_{\mathrm{KL}}=0.01$. We use Fisher damping $0.1$, $10$ CG iterations, and up to $10$ backtracking steps with shrinkage factor $0.8$. The actor is updated on the full batch without minibatching or multi-epoch reuse.

For SPO, we keep the same rollout, advantage estimation, and critic update, but replace the policy update with SPO's first-order softly regularized surrogate. All other hyperparameters follow PPO, so performance differences mainly reflect the actor-update rule. 

\subsection{Environments}
\label{sec:appendix_envs}

All experiments use the standard benchmark implementations from Gymnasium/MuJoCo and ManiSkill. Unless otherwise specified, we adopt the default environment settings provided by the corresponding libraries and report results as the mean over five random seeds. For the main results, Gymnasium/MuJoCo experiments are run on an NVIDIA H800, while ManiSkill experiments are run on 2× NVIDIA GeForce RTX 5090 GPUs. All TRPO and SPO experiments are run on eight RTX 6000 Pro GPUs.

\paragraph{Gym -- Locomotion.}
We evaluate on five widely-used MuJoCo locomotion tasks from Gymnasium (\texttt{-v4} versions):
\texttt{HalfCheetah-v4}, \texttt{Ant-v4}, \texttt{Hopper-v4}, \texttt{Humanoid-v4}, and \texttt{Walker2d-v4}.
We follow standard practice and do not apply additional preprocessing beyond the default observation and
action interfaces of the environments.
When aggregating scores across tasks, we report \emph{normalized return} using TD3-normalization,
\begin{equation}
\mathrm{TD3\text{-}Normalized}(x) := \frac{x - x_{\mathrm{random}}}{x_{\mathrm{TD3}} - x_{\mathrm{random}}},
\label{eq:td3_norm}
\end{equation}
where $x$ denotes the raw episodic return, and $(x_{\mathrm{random}}, x_{\mathrm{TD3}})$ are the random-policy
and TD3 reference scores, respectively (Table~\ref{tab:td3_reference}).

\begin{table}[t]
    \centering
    \caption{\textbf{TD3-normalization reference scores} used for Gym locomotion aggregation.}
    \label{tab:td3_reference}
    \vspace{2pt}
    \small
    \setlength{\tabcolsep}{8pt}
    \begin{tabular}{lcc}
        \toprule
        \textbf{Environment} & \textbf{Random} & \textbf{TD3} \\
        \midrule
        Ant-v4          & -70.288  & 3942  \\
        HalfCheetah-v4  & -289.415 & 10574 \\
        Hopper-v4       & 18.791   & 3226  \\
        Humanoid-v4     & 120.423  & 5165  \\
        Walker2d-v4     & 2.791    & 3946  \\
        \bottomrule
    \end{tabular}
\end{table}

\paragraph{ManiSkill (state).}
We additionally benchmark on state-based manipulation tasks from ManiSkill~\cite{taomaniskill3}.
Each environment provides low-dimensional proprioceptive/state observations, and we evaluate performance
using the standard ManiSkill success metric (success rate).
We use the default task configurations and consider 10 tasks.
Across tasks, we keep the default algorithmic hyperparameters fixed and only vary throughput-related
settings (e.g., number of parallel environments and rollout length). All runs use a total budget of
50M environment steps per task.

\paragraph{ManiSkill (RGB).}
For vision-based control, we evaluate on five ManiSkill tasks with RGB observations. 
We use the default RGB observation interface provided by ManiSkill and report success rate.
Similar to the state-based setting, we keep the algorithmic hyperparameters fixed and only adjust
throughput-related settings for stable training. All runs use 50M environment steps per task.

\section{Extended Study}
\begin{figure}[h]
    \centering
    \includegraphics[width=1.0\linewidth]{image/ablation.pdf}
    \caption{Extended studies on sample efficiency, optimization objective, and scaling. }
    \label{fig:ablation_app}
\end{figure}

\subsection{Additional Comparisons with State-of-the-Art Methods}
\label{ap:sota}

To complement our main evaluation, we compare against two recent state-of-the-art continuous-control methods: TD3~\cite{fujimoto2018addressing} and TD-MPC2~\cite{hansen2024tdmpc2scalablerobustworld}.
Since these methods differ in their interaction patterns and training pipelines, we report wall-clock time curves. 
Our method and TD3 are run on the same hardware configuration (2$\times$ RTX 5090 GPUs), while TD-MPC2 is run on an NVIDIA A200 GPU, which is typically faster, and this makes the
wall-clock comparison conservative with respect to TD-MPC2.
We use the authors' published packages/official implementations for TD3 and TD-MPC2, and follow their standard training protocols. TD-MPC2 are evaluated on the same tasks with 3 random seeds~\cite{hansen2024tdmpc2scalablerobustworld}.

Figure~\ref{fig:ablation_app} (a) compares wall-clock learning curves on Humanoid-v4 and StackCube-v1.
On Humanoid-v4, TD3 improves quickly early on, reaching high normalized return within a short time window, whereas TD-MPC2 makes little progress under the same horizon. Our approach is on-policy and built on PPO, it achieves substantial gains and outperforms standard PPO baselines, continuing to improve with more interaction.
On StackCube-v1, our method rapidly reaches high success rates, while TD3 fails to learn and TD-MPC2 only improves much later in training. Overall, these results highlight that the proposed on-policy discretized actor is particularly effective for challenging manipulation tasks where exploration and optimization are difficult, and that it remains competitive in locomotion while offering favorable wall-clock efficiency.

\subsection{Ablation study on the role of cross-entropy-like optimization}\label{ap:cross}

Discretized categorical policies differ from Gaussian policies in two tightly coupled aspects: 
(i) the discretized action representation induced by binning, and 
(ii) the likelihood model used for optimization, which yields a cross-entropy-like objective rather than a squared-error objective.
To disentangle these effects, we introduce a \emph{loss-swap} ablation that isolates the role of the optimization objective while holding the action representation space fixed.

Starting from our discrete categorical policy, we modify only the likelihood used for policy optimization.
The actor still outputs logits over $K$ bins for each action dimension, inducing probabilities
$p_{\theta,i}(k\mid s)=\mathrm{softmax}(z_{\theta,i}(s))_k$.
Instead of sampling from the categorical distribution, we compute the expected action as the probability-weighted
average of bin centers $\{c_k\}_{k=1}^K$,
\begin{equation}
\label{eq:loss_swap_mean}
\mu_{\theta,i}(s) \;=\; \sum_{k=1}^{K} p_{\theta,i}(k\mid s)\, c_k ,
\end{equation}
and treat $\mu_\theta(s)$ as the mean of a Gaussian policy,
\begin{equation}
\label{eq:loss_swap_gaussian}
\pi^{\text{swap}}_{\theta}(a\mid s) \;=\; \mathcal{N}\!\big(a;\mu_\theta(s),\Sigma\big).
\end{equation}
Policy updates are then performed using the standard Gaussian log-likelihood,
\begin{equation}
\label{eq:loss_swap_ll}
\log \pi^{\text{swap}}_{\theta}(a\mid s)
= -\tfrac{1}{2}(a-\mu_\theta(s))^\top \Sigma^{-1}(a-\mu_\theta(s)) + \mathrm{const},
\end{equation}
which is equivalent to a weighted squared-error objective under fixed $\Sigma$.
This loss-swapped variant uses the same network architecture and discretized action representation as the
original categorical policy, differing only in the likelihood (and thus the optimization loss) used for policy updates.

Figure~\ref{fig:ablation_app} (b) compares RN-D, the loss-swapped variant, and a continuous Gaussian baseline (RN-C).
Replacing the categorical likelihood with a Gaussian likelihood leads to a substantial performance drop, with the loss-swapped variant closely matching the continuous Gaussian baseline.
These results show that discretizing the action space alone is insufficient to explain the observed gains.
Even with a discretized representation, switching from a cross-entropy-like objective to a squared-error objective eliminates most of the performance improvement, highlighting the central role of the categorical likelihood and its induced optimization geometry.

\subsection{Ablation study on the capacity of the actor network}\label{ap:scale_actor}
To study how actor capacity affects on-policy learning with discretized categorical policies, we scale the actor network by varying the hidden width $d_h \in \{64,128,256,512\}$. Note that the maximum hidden dimension could be 4$d_h$. 
We keep the rest of the training pipeline fixed, including the PPO objective, the critic architecture, and all other
hyperparameters, so that differences can be attributed to actor capacity.

Figure~\ref{fig:ablation_app}(c) shows that scaling actor width generally improves optimization speed on both tasks.
On Humanoid-v4, wider actors learn faster and attain higher returns during much of training, with the largest gains when increasing from $d_h=64$ to $d_h\in\{128,256\}$. However, the benefits do not always translate to higher asymptotic performance, and the widest setting ($d_h=512$) can yield similar or even slightly lower final returns.
On StackCube-v1, larger actors typically reach high success earlier, while final success rates are comparable.
Overall, actor capacity is an effective lever for accelerating on-policy learning, but its impact on final performance can be non-monotonic.

\subsection{Ablation study on the discrete bins}\label{ap:bin}
Another factor affecting the performance of discrete policies is the number of action bins, which controls the granularity of the discretized action space. Fewer bins lead to a coarser representation, while more bins provide finer resolution and more closely approximate continuous control. To examine this effect, we sweep the number of action bins from 21 to 1001, with 41 bins used in the main experiments.

Figure \ref{fig:ablation_app} shows how policy performance varies with the number of action bins. For MLP-based discrete policies without residual structure (MLP-D), performance generally degrades as the number of bins increases, but the degradation is relatively mild. When the number of bins is smaller than the hidden dimension, MLP-D performs better than or is at least comparable to the continuous baseline (MLP-C). On StackCube-v1, the success rate decreases almost linearly with the logarithm of the number of action bins (the x-axis is log-scaled), even though the size of the factorized action space grows linearly with the number of bins. This indicates that the performance gains of discrete policies are not solely due to finer action resolution, but are also influenced by optimization behavior.

For RN-D methods, discrete policies achieve strong performance with a small number of bins and consistently outperform the off-policy TD3 baseline on Humanoid. As the number of bins increases, the mean return decreases. On Humanoid, although the average performance drops, the variance across five random seeds remains relatively stable when increasing the number of bins from 11 to 101. A similar trend appears on StackCube-v1, where performance is largely preserved for small to moderate bin counts, followed by a sharp decline once the number of bins exceeds 100.

Overall, these results show that discrete policy performance is sensitive to discretization granularity. With relatively few bins, the optimization advantages introduced by discretization—such as a simpler optimization landscape—outweigh the loss in action resolution. However, as the number of bins becomes large, the policy must predict increasingly fine-grained action distributions, which likely exceeds the effective representational or optimization capacity of the network, leading to the observed performance degradation. A promising way to mitigate this issue is to introduce more efficient action tokenization schemes, which aim to preserve high action precision while retaining the optimization advantages of discrete policies.

\subsection{Additional Runtime Analysis}
\label{app:runtime}

We provide additional runtime analyses to examine the computational overhead of the discretized actor parameterization. In particular, we study two aspects: the effect of the number of discretization bins $K$ on training throughput, and the wall-clock training time of different actor architectures.

\begin{table}[h]
\centering
\caption{Effect of bin count $K$ on training throughput, measured by samples per second (SPS), and wall-clock time on Humanoid-v4. The action dimension is 17. Results are averaged over 5 seeds.}
\label{tab:bin_runtime}
\begin{tabular}{lrrrrrrrrr}
\toprule
$K$ (bins) & 11 & 21 & 41 & 61 & 101 & 201 & 401 & 801 & 1001 \\
\midrule
Avg SPS & 1789 & 1342 & 974 & 720 & 721 & 724 & 719 & 716 & 737 \\
Time (h) & 0.76 & 1.27 & 1.74 & 1.95 & 1.96 & 1.96 & 1.97 & 1.98 & 1.95 \\
\bottomrule
\end{tabular}
\end{table}

Table~\ref{tab:bin_runtime} shows that increasing the number of bins initially reduces throughput, as expected, since a larger categorical action head introduces additional computation. However, the runtime overhead saturates once $K$ becomes moderately large. In particular, after $K=61$, both SPS and wall-clock time remain nearly unchanged, suggesting that the training pipeline is no longer dominated by the cost of increasing the categorical output dimension. This indicates that using a sufficiently fine discretization does not introduce prohibitive computational overhead in practice.

\begin{table}[h]
\centering
\caption{Wall-clock time and throughput, measured by samples per second (SPS), averaged across 5 Gym locomotion tasks with 5 seeds each.}
\label{tab:method_runtime}
\begin{tabular}{lrr}
\toprule
Method & Wall-clock Time (h) & SPS \\
\midrule
MLP-C & 1.65 & 854.4 \\
MLP-D & 1.56 & 925.3 \\
RN-C  & 1.75 & 811.1 \\
RN-D  & 1.64 & 873.8 \\
\bottomrule
\end{tabular}
\end{table}

Table~\ref{tab:method_runtime} further compares the runtime of continuous and discretized actor variants with both MLP and residual-network backbones. The discretized variants do not incur additional wall-clock overhead compared with their continuous counterparts. In fact, MLP-D and RN-D achieve slightly higher SPS and comparable or lower wall-clock time than MLP-C and RN-C, respectively. These results suggest that the proposed discretized actor parameterization improves policy representation without sacrificing training efficiency.


\section{Complete main results}
We provide the complete learning curves for all evaluated tasks. Vision-based tasks are evaluated only under PPO, since PPO is the dominant and most widely used on-policy algorithm among the three considered objectives. In addition, TRPO has substantially higher memory requirements in our implementation and cannot use the same number of parallel environments under the same hyperparameter setting for vision-based tasks.
The figure includes per-task performance over environment steps—normalized return for MuJoCo benchmarks and success rate for ManiSkill manipulation benchmarks, including both state-based and RGB-based settings where applicable. Curves show the mean across five random seeds with shaded regions indicating variability, and are included here for completeness and transparency beyond the aggregated main-text results.
\begin{figure}[t]
    \centering
    \includegraphics[width=0.8\linewidth]{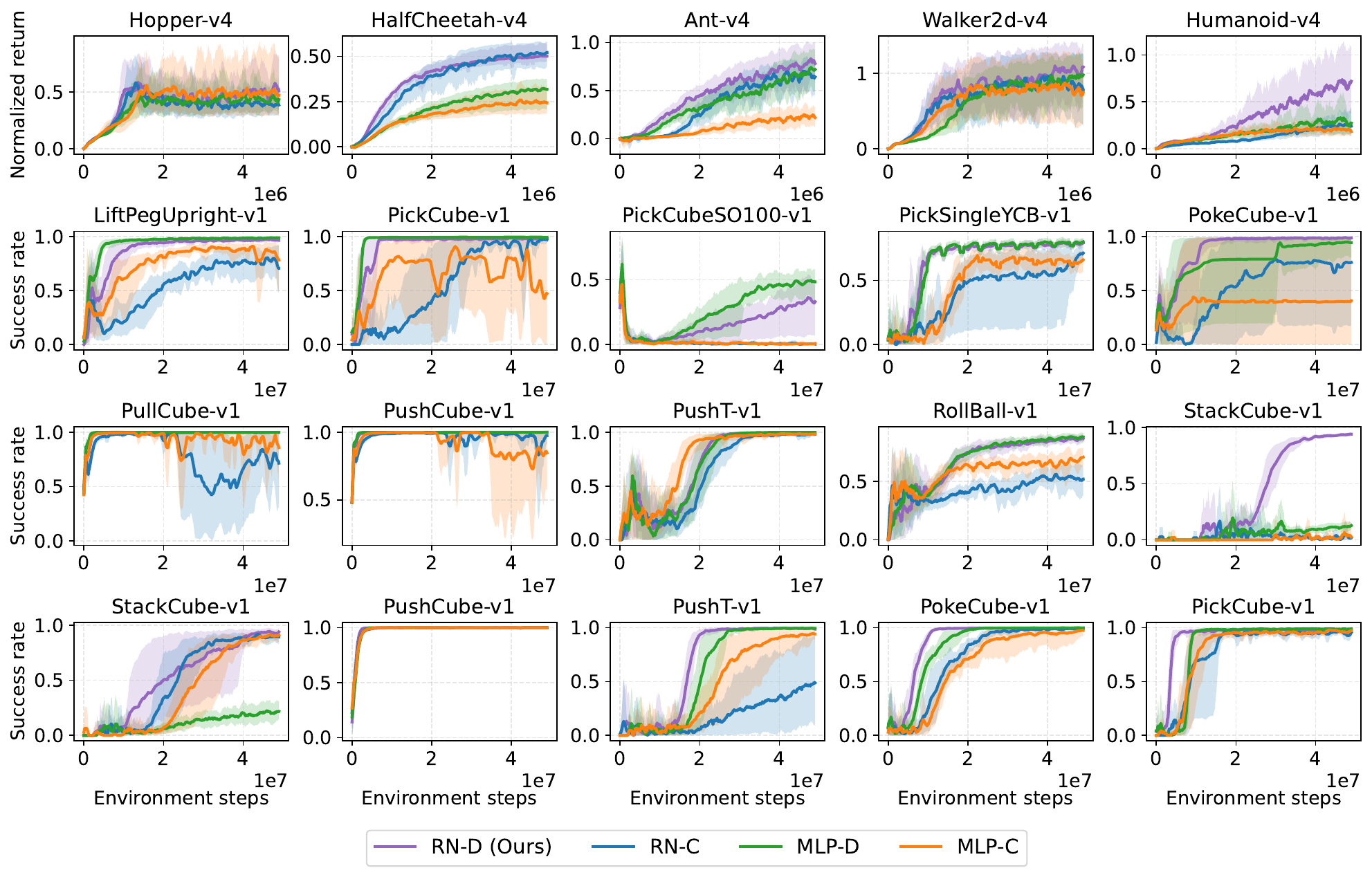}
    \caption{Complete learning curves across all tasks under PPO algorithm. We report normalized return on MuJoCo benchmarks (top row) and success rate on ManiSkill manipulation benchmarks (remaining rows), including both state-based and RGB-based variants. Curves are averaged over 5 random
    seeds; shaded regions denote 95\% stratified bootstrap confidence intervals.  }
    \label{fig:complete}
\end{figure}

\begin{figure}[t]
    \centering
    \includegraphics[width=0.8\linewidth]{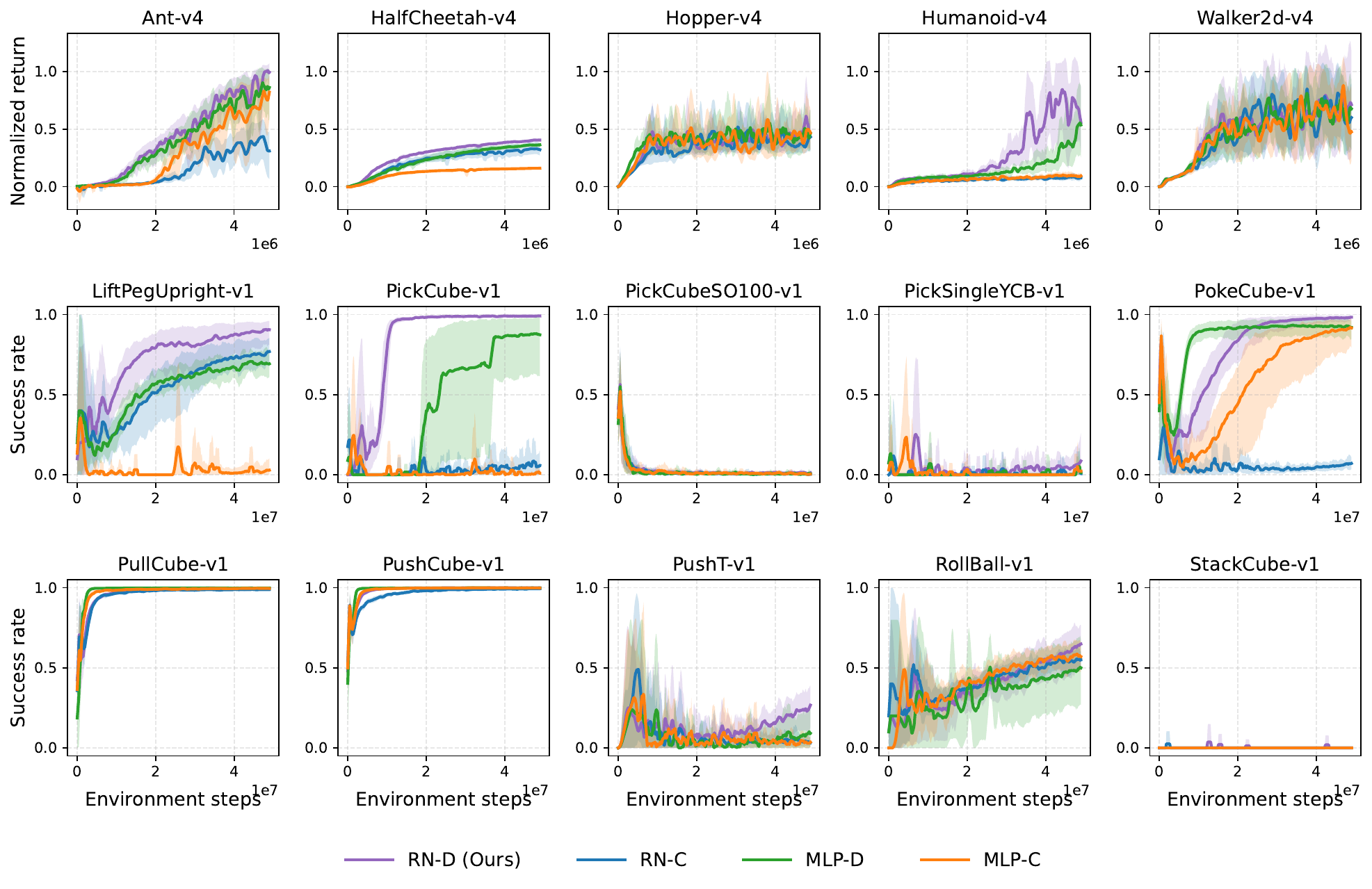}
    \caption{Complete learning curves across all tasks under TRPO algorithm. We report normalized return on MuJoCo benchmarks (top row) and success rate on ManiSkill manipulation benchmarks (remaining rows), including both state-based and RGB-based variants. Curves are averaged over 5 random
    seeds; shaded regions denote 95\% stratified bootstrap confidence intervals.  }
    \label{fig:complete}
\end{figure}

\begin{figure}[t]
    \centering
    \includegraphics[width=0.8\linewidth]{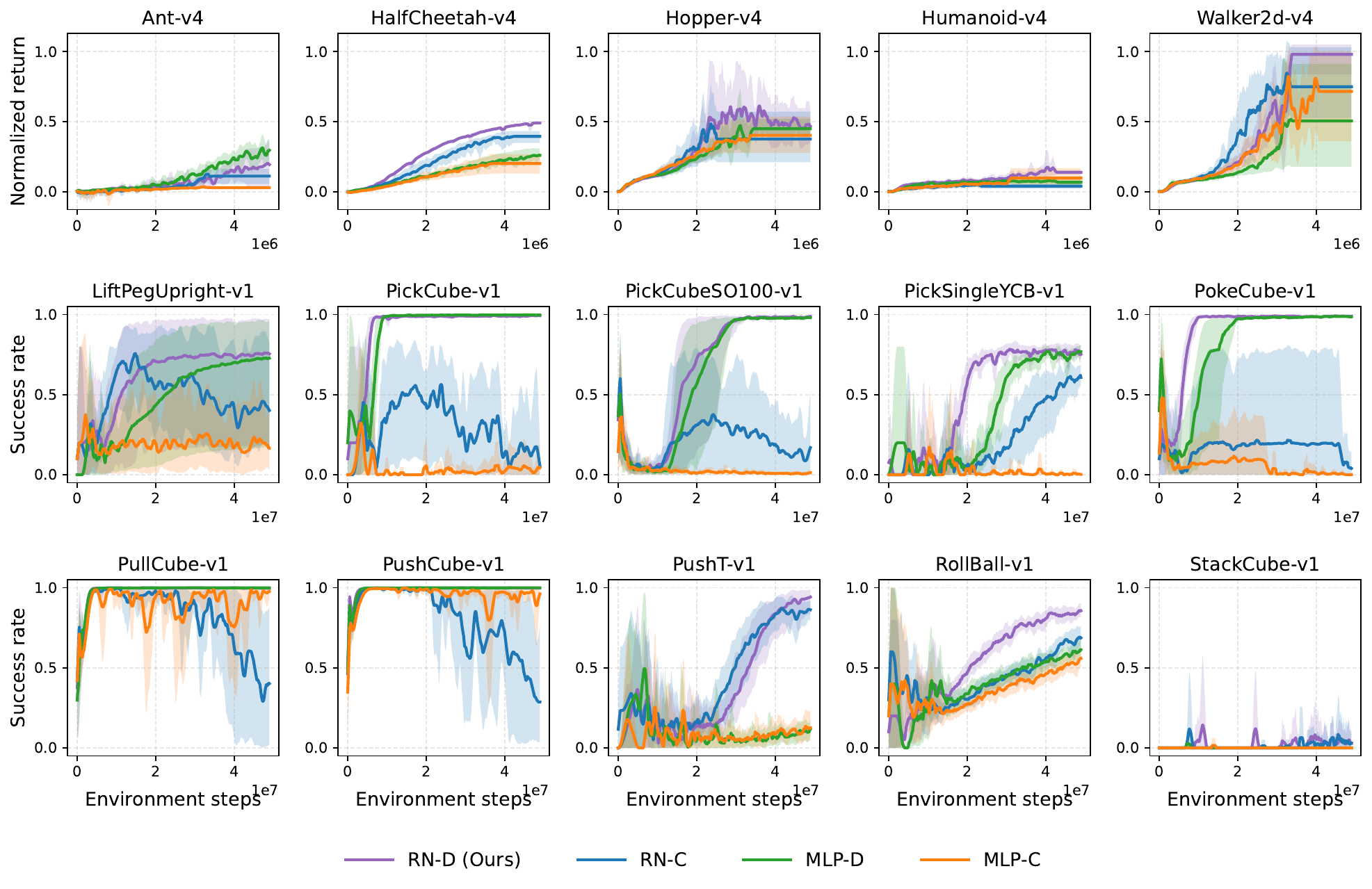}
    \caption{Complete learning curves across all tasks under SPO algorithm. We report normalized return on MuJoCo benchmarks (top row) and success rate on ManiSkill manipulation benchmarks (remaining rows), including both state-based and RGB-based variants. Curves are averaged over 5 random
    seeds; shaded regions denote 95\% stratified bootstrap confidence intervals.  }
    \label{fig:complete}
\end{figure}

\end{document}